\newtheorem{theorem}{Theorem}
\newtheorem{lemma}{Lemma}
\newtheorem{definition}{Definition}
\newtheorem{corollary}{Corollary}
\newtheorem{remark}{Remark}
\newtheorem{proposition}{Proposition}
\def\Gfun{\mathcal{G}}
\def\Qfun{\mathcal{Q}}
\definecolor{mydarkblue}{rgb}{0,0.08,0.55}
\definecolor{mypurple}{rgb}{0.5,0,0.15}
\title{On the Equivalence between Graph Isomorphism Testing and Function Approximation with GNNs}
\author{%
  Zhengdao Chen \\
  Courant Institute of Mathematical Sciences\\
  New York University\\
  \texttt{zc1216@nyu.edu} \\
   \And
     Soledad Villar \\
  Courant Institute of Mathematical Sciences\\
  Center for Data Science \\
  New York University\\
  \texttt{soledad.villar@nyu.edu} \\ 
   \AND
     Lei Chen \\
  Courant Institute of Mathematical Sciences\\
  New York University\\
  \texttt{lc3909@nyu.edu} \\
   \And
    Joan Bruna \\
  Courant Institute of Mathematical Sciences\\
  Center for Data Science \\
  New York University\\
  \texttt{bruna@cims.nyu.edu} \\ 
}
\begin{document}

\maketitle

\begin{abstract}
Graph Neural Networks (GNNs) have achieved much success on graph-structured data. In light of this, there have been increasing interests in studying their expressive power. One line of work studies the capability of GNNs to approximate permutation-invariant functions on graphs, and another focuses on the their power as tests for graph isomorphism. Our work connects these two perspectives and proves their equivalence. We further develop a framework of the expressive power of GNNs that incorporates both of these viewpoints using the language of sigma-algebra, through which we compare the expressive power of different types of GNNs together with other graph isomorphism tests. In particular, we prove that the second-order Invariant Graph Network fails to distinguish non-isomorphic regular graphs with the same degree. Then, we extend it to a new architecture, Ring-GNN, which succeeds in distinguishing these graphs and achieves good performances on real-world datasets.
\end{abstract}

\section{Introduction}
Graph structured data naturally occur in many areas of knowledge, including computational biology, chemistry and social sciences. Graph Neural Networks, in all their forms, yield useful representations of graph data partly because they take into consideration the intrinsic symmetries of graphs, such as invariance and equivariance with respect to a reordering of the nodes \cite{scarselli2008graph, duvenaud2015convolutional, kipf2016semi, gilmer2017neural, hamilton2017inductive, velickovic2017graph, bronstein2017geometric, you2019pgnn}.
All these different architectures are proposed with different purposes (see \cite{wu2019comprehensive} for a survey and the references therein), and a priori it is not obvious how to compare their power. Recent works \cite{xu2018powerful, morris2019higher} proposed to study the representation power of GNNs via their performance on graph isomorphism tests. They proposed GNN models that are as powerful as the one-dimensional Weisfeiler-Lehman ($1$-WL) test for graph isomorphism \cite{weisfeiler1968reduction} and showed that no GNN based on neighborhood aggregation can be more powerful than the 1-WL test. 

Meanwhile, for feed-forward neural networks, many positive results have been obtained regarding their ability to approximate continuous functions, including the seminal results of the the universal approximation theorems \cite{cybenko1989approximation, hornik1991hornik}. Following this line of work, it is natural to study the expressive power of GNNs also in terms of function approximation, especially whether certain families of GNN can achieve universal approximation of continuous functions on graphs that are invariant to node permutations. 
Recent work \cite{maron2019universality} showed the universal approximation power of Invariant Graph Networks (IGNs), constructed based on the invariant and equivariant linear layers studied in \cite{maron2018invariant}, if the order of the tensor involved in the models are allowed to grow as the graph gets larger. 
However, these models are are not quite feasible in practice when the tensor order is high.

The first part of this work aims at building a bridge between graph isomorphism testing and invariant function approximation, the two main perspectives for studying the expressive power of GNNs. We demonstrate an equivalence between the the ability of a class of GNNs to distinguish between any pairs of non-isomorphic graph and its ability to approximate any (continuous) invariant functions on graphs. Furthermore, we show that it is natural to characterize the expressive power of function families on graphs via the sigma-algebras they generate on the graph space, allowing us to build a taxonomy of GNNs based on the inclusion relationships among the respective sigma-algebras.

Building on this theoretical framework, we identify an opportunity to increase the expressive power of the second-order Invariant Graph Network ($2$-IGN) in a tractable way by considering a ring of invariant matrices under addition and multiplication. We show that the resulting model, which we refer to as \emph{Ring-GNN}, is able to distinguish between non-isomorphic regular graphs where $2$-IGN provably fails. We illustrate these gains numerically in prediction tasks on synthetic and real graphs. 

Summary of main contributions:
\begin{itemize}
    \item We show the equivalence between testing for graph isomorphism and approximating of permutation-invariant functions on graphs as perspectives for characterizing the expressive power of GNNs.
    \item We further show that the expressive power of GNNs can be described by the sigma-algebra that they induce on the graph space, which unifies the two perspectives above and enables us to compare the expressive power of different GNNs variants.
    \item We propose Ring-GNN, a tractable extension of $2$-IGN that explores the ring of matrix addition and multiplication, which is more expressive than $2$-IGN and achieves good performances on practical tasks.
\end{itemize}

\section{Related Work}
\paragraph{Graph Neural Networks (GNNs) and graph isomorphism.} Graph isomorphism is a fundamental problem in theoretical computer science. 
It can be solved in quasi-polynomial-time \cite{babai2016graph}, but currently there is no known polynomial-time algorithm for solving it. For each positive integer $k$, the $k$-dimensional Weisfeiler-Lehman test ($k$-WL) is an iterative algorithm for determining if two graphs are isomorphic \cite{weisfeiler1968reduction}. $1$-WL is known to succeed on almost all pairs of non-isomorphic graphs \cite{babai1980random}, and the power of $k$-WL further increases as $k$ grows. The $1$-WL test has inspired the design of several GNN models \cite{hamilton2017inductive, zhang2017wlnm}.
Recently,  
\cite{xu2018powerful, morris2019higher} introduced graph isomorphism tests as a characterization of the power of GNNs and showed that if a GNN follows a neighborhood aggregation scheme, then it cannot distinguish pairs of non-isomorphic graphs that the $1$-WL test fails to distinguish. They also proposed particular GNN architectures that exactly achieves the power of the $1$-WL test by using multi-layer perceptrons (MLPs) to approximate injective set functions. 
A concurrent work \cite{maron2019provably} proves that the $k$th-order Invariant Graph Networks (IGNs) are at least as powerful as the $k$-WL tests, and similarly to our proposal, they augment the $2$-IGN model with matrix multiplication and show that the new model is at least as expressive as the $3$-WL test. \cite{murphy2019relational} proposed relational pooling (RP), an approach that combines \textit{permutation-sensitive} functions under all permutations to obtain a permutation-invariant function. If RP is combined with permutation-sensitive functions that are sufficiently expressive, then it can be shown to be a universal approximator. 
A drawback of RP is that its exact version is intractable computationally, and therefore it needs to be approximated by averaging over randomly sampled permutations, in which case the resulting functions is not guaranteed to be permutation-invariant. 

\paragraph{Universal approximation of functions with symmetry.} Many works have discussed the function approximation capabilities of neural networks that satisfy certain symmetries.
\cite{bloemreddy2019probabilistic} studied the probabilistic and functional symmetry in neural networks, and we discuss its relationship to our work in more detail in Appendix \ref{app.probabilistic}. \cite{ravanbakhsh2017sharing} showed that the equivariance of a neural network corresponds to symmetries in its parameter-sharing scheme. \cite{yarotsky2018universal} proposed a neural network architecture with polynomial layers that is able to achieve universal approximation of invariant or equivariant functions.  
\cite{maron2018invariant} studied the spaces of all invariant and equivariant linear functions, and obtained bases for such spaces. Building upon this work, \cite{maron2019universality} proposed the $G$-invariant networks for parameterizing functions invariant to a symmetry group $G$ on general domains. When we focus the graph domain with the natural symmetry group of node permutations, the model is also known as the $k$-IGNs, where $k$ represents the maximal order of the tensors involved in the model. It was shown in \cite{maron2019universality} that $k$-IGN achieves universal approximation of permutation-invariant functions on graphs if $k$ grows quadratically in the graph size, but such high-order tensors are prohibitive in practice. Upper bounds on the approximation power of the $G$-invariant networks when the tensor order is limited remains an open problem except for when the symmetry group is $A_n$ \cite{maron2019universality}. The concurrent work of \cite{keriven2019universal} provides an alternative proof and extends the result to the equivariant case, although it suffers from the same issue of possibly requiring high-order tensors. Specifically for learning in graphs, \cite{kondor2018covariant} proposed the compositional networks, which achieve equivariance and are inspired by the WL test. In the context of machine perception of visual scenes, \cite{herzig2018mapping} proposed an architecture that can potentially express all equivariant functions.

To the best our knowledge, this is the first work that shows an explicit connection between the two aforementioned perspectives of studying the representation power of GNNs - testing for graph isomorphism and approximating permutation-invariant functions on graphs. Our main theoretical contribution lies in showing an equivalence between them for under both finite and continuous feature spaces, with a natural generalization of the notion of graph isomorphism testing to the latter scenario. Then, we focus on the $2$-IGN model and prove that it cannot distinguish between non-isomorphic regular graphs with equal degrees. Hence, as a corollary, this model is not universal. Note that our result shows an \emph{upper} bound on the expressive power of $2$-IGN, whereas concurrently to us, \cite{maron2019provably} provides a \emph{lower} bound by relating $k$-IGNs to $k$-WL tests. In addition, similar to \cite{maron2019provably}, we also propose a modified version of $2$-IGN to capture higher-order interactions among the nodes without computing tensors of higher-order.

\section{Graph Isomorphism Testing and Universal Approximation}
\label{sec:giso_ua}
\paragraph{Notations} Let $G$ be a graph with $n$ nodes, $[n] \coloneqq \{1, ..., n \}$. The graph structure is characterized by the \emph{adjacency matrix}, $A \in \{0, 1\}^{n \times n}$, where $A_{i, j} = 1$ if and only if nodes $i$ and $j$ are connected by an edge. We allow additional information to be stored in the form of node and edge features, which we assume to belong to a compact set $\mathcal{X}\subset \mathbb R$, and we define $\Gfun \coloneqq \mathcal{X}^{n \times n}$. With an abuse of notation, we also regard $G$ as an $n \times n$ matrix that belongs to $\Gfun$, where $\forall i \in [n]$, $G_{i, i}$ is the feature of the $i$th node, and $\forall i, j \in [n]$, $G_{i, j}$ is the feature of the edge from node $i$ node $j$ (and equal to some predefined null value if $i$ and $j$ are not connected). For example, an undirected graph is represented by a symmetric matrix $G$. If there are no node or edge features (in other words, all nodes / edges are intrinsically indistinguishable), $G$ can be viewed as identical to $A$. Thus, modulo the permutation of the node orders, $\Gfun$ is the space of graphs with $n$ nodes. For two graphs $G, G' \in \Gfun$, we say they are \emph{isomorphic} (and write $G \simeq G'$) if $\exists \Pi \in S_n$ such that $\Pi^\intercal \cdot  G \cdot \Pi = G'$, where $S_n$ denotes the set of all $n \times n$ permutation matrices. A function $f$ on $\Gfun$ is called \textit{permutation-invariant} if $f(\Pi^\intercal \cdot G \cdot \Pi) = f(G)$, $\forall G \in \Gfun$, $\forall \Pi \in S_n$. 

A GNN with a graph-level scalar output represents a parameterized \emph{collection} of functions from $\Gfun$ to $\mathbb{R}$, which are typically permutation-invariant by design.
Given such a collection, we will show a close connection between its ability to approximate 
 permutation-invariant functions on $\Gfun$ and its power as graph isomorphism tests.  
First, we define these two properties precisely for any collection of permutation-invariant functions on $\Gfun$, denoted by $\mathcal{C}$:
\begin{definition}
\label{pd}
We say $\mathcal C$ is \textbf{GIso-discriminating} if $\forall G_1, G_2 \in \Gfun$ such that $G_1 \not\simeq G_2$,  $\exists h \in \mathcal{C}$ such that $h(G_1) \neq h(G_2)$. This definition is illustrated by Figure~\ref{fig:giso}. 
\end{definition}
\begin{figure}
\label{fig:giso}
    \centering
    \includegraphics[width=0.3\textwidth,trim={6cm 3cm 14cm 4cm},clip]{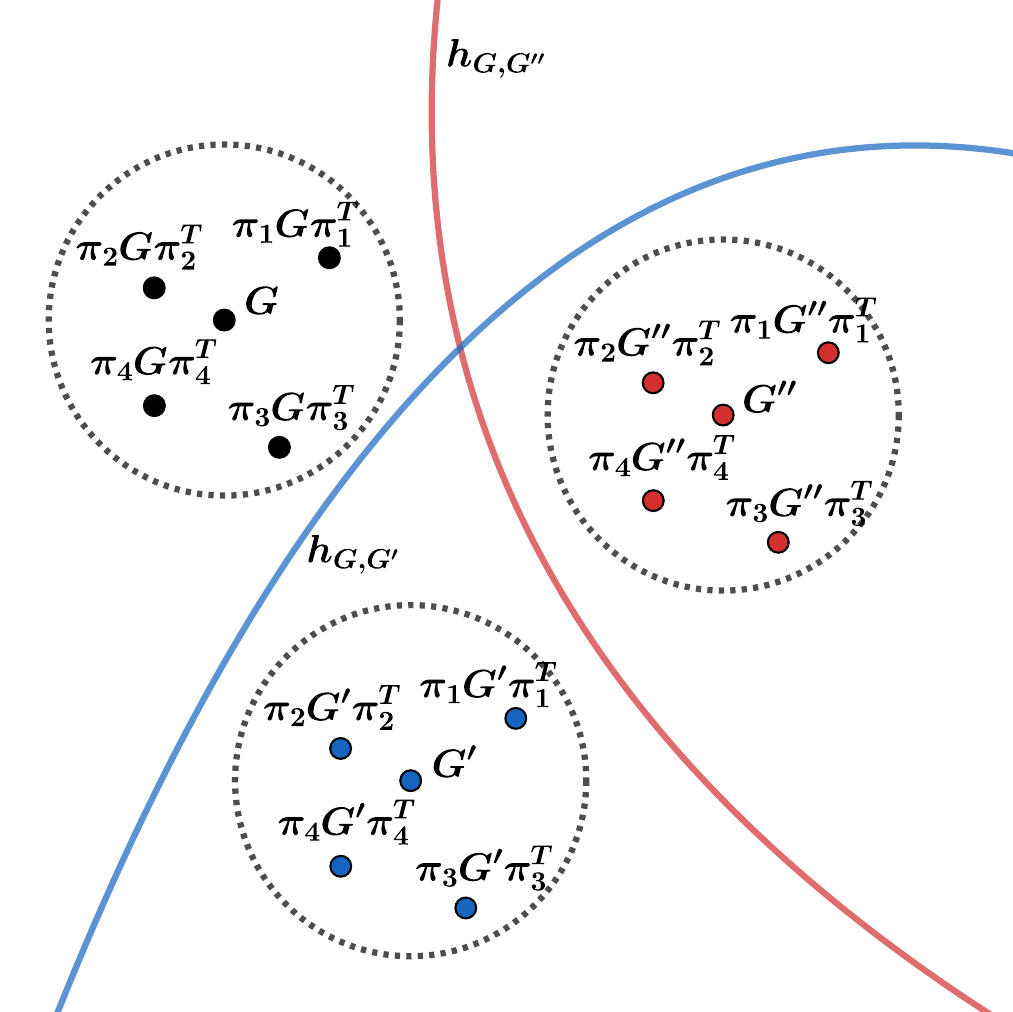}
\caption{Illustrating the definition of GIso-discriminating. $G, G'$ and $G''$ are mutually non-isomorphic, and each of the big circles with dashed boundary represents an equivalence class under graph isomorphism. $h_{G, G'}$ is a permutation-invariant function that obtains different values on equivalence class of $G$ and on that of $G'$, and similar $h_{G, G''}$. If the graph space has only these three equivalence classes of graphs, then $\mathcal{C} = \{h_{G, G'}, h_{G, G''} \}$ is GIso-discriminating.}
\end{figure}
\begin{definition}
\label{def:ua}
We say $\mathcal C$ is \textbf{universally approximating} if for all continuous permutation-invariant function $f: \Gfun \to \mathbb{R}$, and $\forall \epsilon > 0$, $\exists h \in \mathcal{C}$ such that $\| f - h \|_{\infty} := \sup_{G \in \Gfun} |f(G) - h(G)| < \epsilon$.
\end{definition}

\subsection{Finite feature space}
We first consider the simpler case where $\mathcal{X}$ is finite and show the equivalence between the two properties defined above.
\begin{theorem}
\label{UA2PD}
Suppose $\mathcal{X}$ is finite. If $\mathcal{C}$ is universally-approximating, then it is also GIso-discriminating.
\end{theorem}
\begin{proof}
As $\mathcal{X}$ is finite, we assume without loss of generality that it is a subset of $\mathbb{Z}$. Consider any pair of non-isomorphic graphs $G_1, G_2 \in \Gfun$. We define the \emph{$G_1$-indicator} function, $\mathds{1}_{\simeq G_1}: \Gfun \to \mathbb R$, by $\mathds{1}_{\simeq G_1}(G)=1$ if  $G \simeq G_1$ and 0 otherwise, which is evidently permutation-invariant. If $\mathcal{C}$ is universally-approximating, then $\mathds{1}_{\simeq G_1}$ can be approximated with precision $\epsilon = 0.1$ by some function $h \in \mathcal{C}$. Then, $h$ is a function that distinguishes $G_1$ from $G_2$.
\end{proof}

To obtain our result on the reverse direction, we introduce a concept of \emph{(NN-)augmented} collection of functions, which is especially natural when $\mathcal{C}$ itself is parameterized as neural networks.
It is defined to include any function that maps an input graph $G$ to $\mathcal{NN}([h_1(G), ..., h_d(G)])$, where $\mathcal{NN}$ is a feed-forward neural network (i.e. multi-layer perceptron) with $d$ as the input dimension and ReLU as the activation function, and $h_1, ..., h_d \in \mathcal{C}$. When $\mathcal{NN}$ is restricted to neural networks with at most $L$ layers, we denote the augmented collection by $\mathcal{C}^{+L}$. 

\begin{remark}
If $\mathcal{C}_{L_0}$ is the collection of feed-forward NNs with $L_0$ layers, then $\mathcal{C}_{L_0}^{+L}$ represents the collection of feed-forward NNs with $L_0 + L$ layers.
\end{remark}

\begin{remark}
If $\mathcal{C}$ is a collection of permutation-invariant functions, then so is $\mathcal{C}^{+L}$.
\end{remark}

Then, we are able to state the following result, which is proved in Appendix~\ref{app:pf_thm2}.\footnote{A later work \cite{chen2020can} contains a simpler proof.} 
\begin{theorem}
\label{PD2UAfin}
Suppose $\mathcal{X}$ is finite. If $\mathcal{C}$ is GIso-discriminating, then $\mathcal{C}^{+2}$ is universal approximating.
\end{theorem}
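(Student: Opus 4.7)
The plan is to exploit the finiteness of $\mathcal X^{n\times n}$: there are only finitely many isomorphism classes $[G_1],\ldots,[G_N]$, with representatives $G_1,\ldots,G_N$. Since every permutation-invariant $f:\mathcal X^{n\times n}\to\mathbb R$ is constant on each class, approximating $f$ to any tolerance $\epsilon>0$ reduces to the stronger task of \emph{exactly interpolating} the $N$ target values $f(G_1),\ldots,f(G_N)$ by some permutation-invariant member of $\mathcal C^{+2}$.

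First I would build a permutation-invariant feature map $\Phi:\mathcal X^{n\times n}\to \mathbb R^d$ out of members of $\mathcal C$ that \emph{separates} the classes. For every pair $i\neq j$, the GIso-discriminating hypothesis supplies some $h_{ij}\in\mathcal C$ with $h_{ij}(G_i)\neq h_{ij}(G_j)$; enumerating these as $h_1,\ldots,h_d$ and setting $\Phi(G):=(h_1(G),\ldots,h_d(G))$ gives an invariant map whose images $\Phi(G_1),\ldots,\Phi(G_N)$ are pairwise distinct in $\mathbb R^d$. I would then pick a generic direction $w\in\mathbb R^d$ (avoiding the finitely many hyperplanes $\{u:u^\intercal(\Phi(G_i)-\Phi(G_j))=0\}$) so that the scalars $z_i:=w^\intercal \Phi(G_i)$ are also pairwise distinct, and relabel so that $z_1<\cdots<z_N$.

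Next I would build the MLP. Choose knots $b_0<z_1<b_1<z_2<\cdots<b_{N-1}<z_N$ and seek a continuous piecewise linear $g:\mathbb R\to\mathbb R$ of the form $g(z)=\sum_{k=0}^{N-1}\gamma_k\,\mathrm{ReLU}(z-b_k)$ satisfying $g(z_i)=f(G_i)$; because only the first $i$ ReLU units are active at $z_i$, this is a lower-triangular linear system in the $\gamma_k$'s with nonzero diagonal entries $z_{k+1}-b_k>0$, hence uniquely solvable. Define $\mathcal{NN}(v):=g(w^\intercal v)$, which is implemented by a single ReLU hidden layer of width $N$ (absorbing $v\mapsto w^\intercal v-b_k$ into the first affine map, applying ReLU, then applying a linear output layer with weights $\gamma_k$), so $\mathcal{NN}$ is a $2$-layer MLP. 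The composite $h(G):=\mathcal{NN}(\Phi(G))$ lies in $\mathcal C^{+2}$, is permutation-invariant by invariance of $\Phi$, and satisfies $h(G_i)=f(G_i)$ for each $i$; by invariance $h$ agrees with $f$ on every graph, so $\|f-h\|_\infty=0<\epsilon$.

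The only nontrivial piece is honoring the restricted depth $L=2$. The one-dimensional projection by $w$ is what makes this possible: it collapses what would otherwise be a $d$-dimensional scattered-point interpolation problem into a scalar one, for which the classical ``sum of shifted ReLU ramps'' construction gives an exact fit with a single hidden layer. Everything else is bookkeeping enabled by the compactness--in fact, finiteness--of the domain.
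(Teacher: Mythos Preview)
Your proof is correct and takes a genuinely different route from the paper's. The paper proceeds in two stages: Lemma~1 builds, for each isomorphism class $[G]$, a ``detector'' $\tilde h_G(G^*)=\sum_{G'\not\simeq G}|h_{G,G'}(G^*)-h_{G,G'}(G)|$ in $\mathcal C^{+1}$ that vanishes exactly on $[G]$; Lemma~2 then converts each $\tilde h_G$ into the indicator $\mathds 1_{\{\cdot\,\simeq\, G\}}$ via a one-hidden-layer bump function and writes $f$ as a weighted sum of these indicators, landing in $\mathcal C^{+2}$. You instead assemble a single separating feature map $\Phi$, project it onto a generic line to obtain distinct scalars $z_1<\cdots<z_N$, and solve the resulting one-dimensional interpolation problem exactly with a lower-triangular system of shifted ReLU ramps. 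Your construction is more economical---in fact, by the paper's own layer convention (in its proof of Lemma~1 an affine--ReLU--linear composite is already placed in $\mathcal C^{+1}$), your $\mathcal{NN}$ uses only one hidden layer, so you have implicitly established the stronger conclusion that $\mathcal C^{+1}$ already suffices. The paper's per-class detector decomposition, on the other hand, is the template that carries over to the continuous-feature setting (Lemmas~3 and~4 mirror Lemmas~1 and~2), whereas your generic-line projection trick relies essentially on there being only finitely many isomorphism classes to interpolate.
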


\subsection{Continuous feature space}
While graph isomorphism is inherently a discrete problem, 
the question of universal approximation is also interesting in cases where the input space is continuous.
We can naturally generalize the above results naturally to the scenarios of continuous input space under Definitions~\ref{pd} and \ref{def:ua}. 

\begin{theorem}\label{ua2pdinf}
Suppose $\mathcal{X}$ is a compact subset of $\mathbb{R}$. If $\mathcal{C}$ is universally approximating, then it is also GIso-discriminating.
\end{theorem}

\begin{theorem}
\label{thm:4}
Suppose $\mathcal{X}$ is a compact subset of $\mathbb{R}$. If $\mathcal{C}$ is a collection of continuous permutation-invariant functions on $\Gfun$ that is is GIso-discriminating, then $\mathcal{C}^{+2}$ is universally approximating.
\end{theorem}
These results are proved in Appendices~\ref{app:pf_thm3} and \ref{app:pf_thm4}.

\section{Characterizing Expressive Power through Sigma-Algebras}
\label{sec.sigma}
In this section, we assume for simplicity that $\mathcal{X}$ is a finite set, and hence so is $\Gfun$. Given a graph $G \in \Gfun$, we use $\mathcal{E}(G)$ to denote its \emph{isomorphism class}, defined as $\{ G' \in \Gfun: G' \simeq G \}$. Then, we let $\Qfun \coloneqq \Gfun /_\simeq $ denote the set of isomorphism classes in $\Gfun$, that is, 
$\Qfun \coloneqq \{ \mathcal{E}(G): G \in \Gfun \}$. The proofs of the theorems in this section are given in Appendix~\ref{sec.proofs.reformulating}.
\subsection{Defining sigma-algebras}

A maximally-expressive collection of permutation-invariant functions, $\mathcal{C}$, is one that allows us to identify exactly the isomorphism class $\tau$ that any given graph $G$ belongs to through the outputs of the functions $\mathcal{C}$ applied to $G$. 
In other words, each function on $\Gfun$ can be viewed as a``measurement'' that partitions the space $\Gfun$ into different subsets, where each subset contains all graphs in $\Gfun$ on which the function outputs the same value. Then, heuristically speaking, $\mathcal{C}$ being maximally expressive means that, collectively, the functions in $\mathcal{C}$ partition $\Gfun$ into exactly $\Qfun$.

This intuition can be formalized via the language of sigma-algebra. Recall that an \emph{algebra} on the set $\Gfun$ is a collection of subsets of $\Gfun$ that includes $\Gfun$ itself, is closed under complement, and is closed under finite union. In our case, as $\Gfun$ is assumed to be finite, it holds that an algebra on $\Gfun$ is also a \emph{sigma-algebra} on $\Gfun$, where a sigma-algebra further satisfies the condition of being closed under countable unions. If $\mathcal{S}$ is a collection of subsets of $\Gfun$, we let $\sigma(\mathcal{S})$ denote the sigma-algebra \emph{generated} by $\mathcal{S}$, defined as the smallest algebra that contains $\mathcal{S}$. Then, given a function $f$ on $\Gfun$, we also let $\sigma(f)$ denote the sigma-algebra generated by $f$, defined as the smallest sigma-algebra that contains all the pre-images under $f$ of the (Borel) sigma-algebra on $\mathbb{R}$. For more background on sigma-algebras and measurability, we refer the interested readers to standard texts on measure theory, such as \cite{bartle2014elements}.
Then, the following observation is straightforward:
\begin{proposition}
\label{prop:measurable}
A function $f$ on $\Gfun$ is permutation-invariant if and only if $f$ is measurable with respect to $\sigma(\Qfun)$.
\end{proposition}

\subsection{Graph isomorphism testing, universal approximation, and sigma-algebra inclusions}
\label{sec.reformulating}
Let $\mathcal{C}$ be a class of permutation-invariant functions on $\Gfun$. We further define the sigma-algebra generated by $\mathcal{C}$, denoted by $\sigma(\mathcal{C})$, as the smallest sigma-algebra that includes $\sigma(f)$ for all $f \in \mathcal{C}$. Then, Proposition~\ref{prop:measurable} implies that $\sigma(\mathcal{C}) \subseteq \sigma(\Qfun)$. Furthermore, as we shall demonstrate below, the expressiveness of $\mathcal{C}$ is reflected by the fineness of $\sigma(\mathcal{C})$, with a maximally-expressive $\mathcal{C}$ attaining $\sigma(\mathcal{C}) = \sigma(\Qfun)$:
\begin{theorem}\label{teo5}
Suppose $\mathcal{X}$ is finite. If $\mathcal{C}$ is GIso-discriminating, then $\sigma(\mathcal{C}) = \sigma(\Qfun)$.
\end{theorem}
Together with Theorem \ref{UA2PD}, the following is then an immediate consequence:
\begin{corollary}
Suppose $\mathcal{X}$ is finite. If $\mathcal{C}$ achieves universal approximation, then $\sigma(\mathcal{C}) = \sigma(\Qfun)$.
\end{corollary}
Conversely, we can also show that:
\begin{theorem} \label{teo6}
Suppose $\mathcal{X}$ is finite. If $\sigma(\mathcal{C}) = \sigma(\Qfun)$, then $\mathcal{C}$ is GIso-discriminating.
\end{theorem}

The framework of sigma-algebra not only allows us to reformulate the notions of being GIso-discriminating or universally-approximating, as shown above, but also allows us to compare the expressive power of different function families on graphs formally.
Given two classes of functions $\mathcal{C}_1$ and $\mathcal{C}_2$, we can formally define the statement ``$\mathcal{C}_1$ has less expressive power than $\mathcal{C}_2$'' as equivalent to $\sigma(\mathcal{C}_1) \subseteq \sigma(\mathcal{C}_2)$.
In Appendix \ref{app.comparison}, we use this notion to compare the expressive power of several different types of GNNs as well as other graph isomorphism tests like 1-WL and the linear programming relaxation, and the results are illustrated in Figure~\ref{fig.diagram}.

\begin{figure}[ht]
\label{diagram_main_text}
\small
\centering
\begin{tikzcd}
\text{sGNN}_1 \arrow[r,hook] & 1\text{-WL} \equiv \text{GIN} \equiv \text{LP} \arrow[d, hook] \arrow[dr, hook] &  \\
\text{adjacency spectrum} \arrow[dr,hook] & 2\text{-WL} \equiv 2\text{-IGN} \arrow[d,hook] & \text{SDP} \arrow[d,hook]\\
\text{sGNN}_J~ (J>1) \arrow[r, hook] & 3\text{-WL} \equiv \text{Ring-GNN / PPGN} \arrow[d, hook] &\text{SoS hierarchy} \\
& k\text{-WL}~ (k > 3) \equiv k\text{-GNN} \equiv k\text{-IGN} &
\end{tikzcd}
\caption{\small Comparison of function classes on graphs in terms of their expressive power under the sigma-algebra framework proposed in Section \ref{sec.sigma}, with details given in Appendix \ref{app.comparison}. For completeness, we have included relevant results that appeared later than the original publication of this work.}
\label{fig.diagram}
\end{figure}

\section{Ring-GNN: Exploring the Ring of Graph Operators with a GNN}
\subsection{Limitation of the $2$-Invariant Graph Network ($2$-IGN)}
We start by considering the \emph{Invariant Graph Networks} (IGNs, a.k.a. $G$-invariant networks) proposed in \cite{maron2019universality}, which are designed by interleaving permutation-equivariant linear layers (between tensors of potentially different orders) and point-wise nonlinear activation functions.
We present its definition in Appendix~\ref{app.Ginvariant} for completeness. In particular, for a positive integer $k$, an IGN model that involves tensors of order at most $k$ is called a $k$-IGN.
It is proved in \cite{maron2019universality} that $k$-IGN can achieve universality on graphs of size $n$ if $k$ grows quadratically in $n$, but less is known about its expressive power when $k$ is restricted. Meanwhile, in practice, it is difficult to implement an $k$-IGN when $k \geq 3$. Hence, our first goal is to examine its expressive power when $k = 2$.
The following result shows that $2$-IGN is not universal\footnote{In fact, it has later been proved that $k$-IGN is exactly as powerful as the $k$-WL test \cite{chen2020can, geerts2020expressive, geerts2022expressiveness}.}. The proof is given in Appendix \ref{app.Ginvariant}. 
\begin{theorem} \label{prop.Ginvariant}
$2$-IGN cannot distinguish among non-isomorphic regular graphs with the same degree.
\end{theorem}
For example, it cannot distinguish the pair of Circular Skip Link (CSL) graphs shown in Figure \ref{cslfig}.
\subsection{Ring-GNN as an extension of $2$-IGN}
\label{sec:ringgnn}
Given this limitation, we propose a novel GNN architecture that extends the $2$-IGN model without resorting to higher order tensors, which, specifically, are able to distinguish certain pairs of non-isomorphic regular graphs with the same degree. 

To gain intuition, we take 
the pair $G_{8, 2}$ and $G_{8,3}$ illustrated in Figure \ref{cslfig}
as an example.
Since all nodes in both graphs have the same degree, we are unable to break the symmetry among the nodes by simply updating the node hidden states through either neighborhood aggregation (as in $1$-WL and GIN) or through second-order permutation-equivariant linear layers (as in $2$-IGN). Meanwhile, however, nodes in the \textit{power graphs}\footnote{If $A$ is the adjacency matrix of a graph, its power graph has adjacency matrix $\min(A^2, 1)$. See Appendix~\ref{app:diagram}.} of $G_{8, 2}$ and $G_{8,3}$ have different degrees.
This observation motivates us to consider an augmentation of the $2$-IGN model that explores the polynomial \emph{ring} generated by the input matrix.
Then, together with point-wise nonlinear activation functions such as ReLU, power graph adjacency matrices like $\min(A^2, 1)$ can be expressed with suitable choices of model parameters. 

To define our model, we revisit the theory of linear equivariant functions developed in \cite{maron2018invariant}.
It is shown that any linear permutation-equivariant layer from $\mathbb{R}^{n \times n}$ to $\mathbb{R}^{n \times n}$ can be represented as $L_\theta(A) \coloneqq \sum_{p=1}^{15} \theta_p L_p(A) + \sum_{p=16}^{17} \theta_p \overline{L}_p$, where $\{L_p\}_{p \in [15]}$ is the set of 15 basis functions
for the space of linear permutation-equivariant functions from $\mathbb{R}^{n \times n}$ to $\mathbb{R}^{n \times n}$, $\overline{L}_{16}$ and $\overline{L}_{17} \in \mathbb{R}^{n \times n}$ are the basis for the permutation-equivariant bias terms, and $\theta \in \mathbb{R}^{17}$ are trainable parameters. Generalizing to maps from $\mathbb{R}^{n \times n \times d}$ to $\mathbb{R}^{n \times n \times d'}$, the full permutation-equivariant linear layer can be defined as $L_{\theta}(A)_{\cdot,\cdot,k'} \coloneqq \sum_{k=1}^d \big ( \sum_{p=1}^{15} \theta_{k, k', p} L_p(A_{\cdot, \cdot, k}) + \sum_{p=16}^{17} \theta_{k, k', p} \overline{L}_p \big )$, with $\theta \in \mathbb{R}^{d \times d' \times 17}$.

We now define a new architecture as follows. We let $A^{(0)} \in \mathbb R^{n\times n\times d_0}$ denote the generic input. For example, we can set $d_0=1$ and $A^{(0)} = G$. We fix some integer $T$ that denotes the number of layers, and for $t \in [T]$, we iteratively define
\begin{eqnarray*}
B_1^{(t)}&=& \sigma(L_{\alpha^{(t)}}(A^{(t-1)})) \\
B_2^{(t)}&=& \sigma(L_{\beta^{(t)}}(A^{(t-1)})\cdot L_{\gamma^{(t)}}(A^{(t-1)})) \\
A^{(t)}&=& k_1^{(t)} B_1^{(t)} + k_2^{(t)} B_2^{(t)}
\end{eqnarray*}
where $k_1^{(t)}, k_2^{(t)} \in \mathbb{R}$ and $\alpha^{(t)}, \beta^{(t)}, \gamma^{(t)} \in \mathbb{ R}^{d_{t-1} \times d_{t} \times 17}$ are trainable parameters, and $\sigma$ is the (entry-wise) ReLU function. 
If a graph-level scalar output is desired, then at the final layer, we compute the output as $\sum_{k=1}^{d_T} ( \theta_{k, 1} \sum_{i,j=1}^n \sigma(A_{i,j}^{(T)}) + \theta_{k, 2} \sum_{i=1}^n \sigma( A_{i,i}^{(T)} ) )$, where $\theta^{(T)} \in \mathbb{R}^{d_T \times 2}$ are additional trainable parameters. Note that this final layer is invariant to node permutations, and hence the overall model is also permutation-invariant. We call the resulting architecture the \textit{Ring-GNN}.

Note that each layer is equivariant, and the map from $A$ to the final scalar output is invariant. A Ring-GNN can reduce to a $2$-IGN if $k_2^{(t)} = 0$ for each $t$. With $J+1$ layers and suitable choices of the parameters, it is possible to expressed $\min(A^{2^{J}}, 1)$ in the $(J+1)^{th}$ layer. Therefore, we expect it to succeed in distinguishing certain pairs of regular graphs that the $2$-IGN fail on, such as the CSL graphs. Indeed, this is verified in the synthetic experiment presented in the next section. In addition, the normalized Laplacian matrix $\Delta \coloneqq I - D^{-1/2} A D^{-1/2}$ can also be approximated, since the degree matrix $D$ can be obtained from $A$ through a permutation-equivariant linear function, and then entry-wise inversion and square-root on the diagonal can be approximated by an MLP.

Computationally, the complexity of running the forward model grows as $O(n^{2.38})$ as $n$ increases, which is dominated by the matrix multiplication steps \cite{coppersmith1987arithmetic}.
In comparison, a $k$-IGN model will have complexity $\Omega(n^k)$. Therefore, the matrix multiplication steps enable the Ring-GNN to compute some higher-order interactions in the graph (which is neglected by $2$-IGN) while remaining relatively tractable computationally.
We also note that Ring-GNN can be augmented with matrix inverses or, more generally, functions on the spectrum of any of the intermediate representations. \footnote{When $A=A^{(0)}$ is the adjacency matrix of an undirected graph, one easily verifies that $A^{(t)}$ contains only symmetric matrices for all $t$.} 

\begin{figure}
    \label{cslfig}
    \centering
    \includegraphics[width=0.15\textwidth,trim={6cm 4.8cm 20cm 7.8cm},clip]{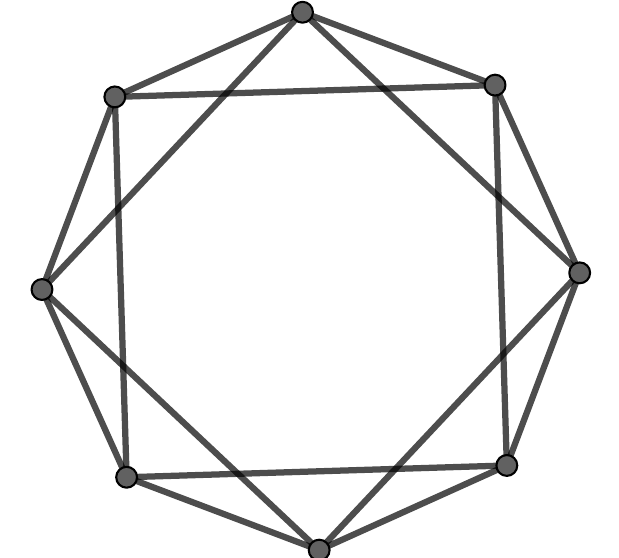}
    \includegraphics[width=0.15\textwidth,trim={6cm 4.8cm 20cm 7.8cm},clip]{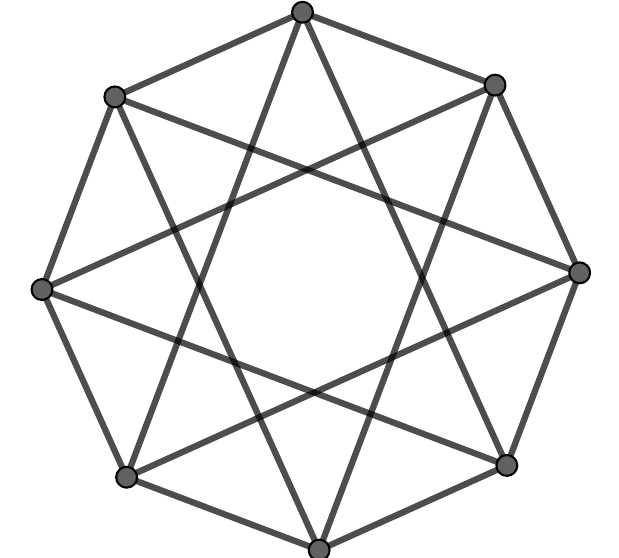}
    \caption{The Circular Skip Link (CSL) graphs $G_{n,k}$ are undirected graphs with $n$ nodes such that nodes $i$ and $j$ are connected if and only if $|i-j|\equiv 1 \text{ or } k \pmod n$. In this figure, we depict (left) $G_{8,2}$ and (right) $G_{8,3}$. It is easy to check that $G_{n,k}$ and $G_{n',k'}$ are not isomorphic unless $n=n'$ and $k\equiv \pm k' \pmod n$. Both $1$-WL and $2$-IGN fail to distinguish them.}
    \label{fig.skiplength}
\end{figure}

\section{Experiments}
\label{experiments}
The different models and the detailed setup of the experiments are discussed in Appendix \ref{archi}. All experiments are conducted on GeForce GTX 1080 Ti and RTX 2080 Ti.\footnote{The code is available at  \url{https://github.com/leichen2018/Ring-GNN}.}

\subsection{Classifying Circular Skip Links (CSL) graphs}
\label{cslexp}
The following experiment on synthetic data demonstrates the connection between function fitting and graph isomorphism testing. The CSL graphs\footnote{Link: \url{https://github.com/PurdueMINDS/RelationalPooling/tree/master/}.} are undirected regular graphs with node degree 4 \cite{murphy2019relational}, as illustrated in Figure \ref{cslfig}. Note that two CSL graphs $G_{n,k}$ and $G_{n',k'}$ are not isomorphic unless $n=n'$ and $k\equiv \pm k' \pmod n$. In the experiment, which has the same setup as in \cite{murphy2019relational}, we fix $n=41$ and set $k \in \{2, 3, 4, 5, 6, 9, 11, 12, 13, 16 \}$, and each $k$ corresponds to a distinct isomorphism class. The task is then to classify a graph $G_{n, k}$ by its skip length $k$.

Note that since the 10 classes have the same size, a naive uniform classifier should obtain $10\%$ accuracy. As we see from Table \ref{table.synthetic}, neither GIN and $2$-IGN outperform the naive classifier. Their failure in this task is unsurprising: $1$-WL is proved to fall short of distinguishing such pairs of non-isomorphic regular graphs \cite{cai1992optimal}, and hence neither can GIN \cite{xu2018powerful}; by Theorem~\ref{prop.Ginvariant}, $2$-IGN is unable to distinguish them either. Therefore, their empirical failure in this classification task is consistent with their theoretical limitations as graph isomorphism tests (and can also be understood as approximating the function that maps the graphs to their class labels).

It should be noted that, since graph isomorphism tests are not entirely well-posed as classfication tasks, the performance of GNN models can vary due to randomness. But the fact that Ring-GNNs achieve a relatively high \emph{maximum} accuracy indicates that, as a class of functions on graphs, it is rich enough to contain a good proportion of functions that distinguish the CSL graphs. 

\begin{table}[ht]
\centering
\begin{tabular}{l|lll||ll|ll}
\hline
& \multicolumn{3}{|c||}{CSL} & \multicolumn{2}{c|}{IMDB-B} & \multicolumn{2}{c}{IMDB-M} \\
GNN architecture              & max  & min & std  & mean & std & mean & std \\
\hline \hline
RP-GIN $\dagger$              & 53.3 & 10  & 12.9 & -     & -     & -     & -     \\
GIN $\dagger$ $\ddagger$       & 10   & 10  & 0    & 75.1  & 5.1   & 52.3  & 2.8   \\
$2$-IGN $\dagger$ & 10   & 10  & 0    & 71.3 & 4.5   & 48.6 & 3.9   \\
$\text{sGNN}_5$                        & 80   & 80  & 0    & 72.8  & 3.8   & 49.4  & 3.2   \\
$\text{sGNN}_2$                        & 30   & 30  & 0    & 73.1  & 5.2   & 49.0    & 2.1   \\
$\text{sGNN}_1$                        & 10   & 10  & 0    & 72.7  & 4.9   & 49.0    & 2.1   \\
LGNN \cite{chen2019cdsbm}                         & 30   & 30  & 0    & 74.1  & 4.6   & 50.9  & 3.0     \\
Ring-GNN                          & 80   & 10  & 15.7    & 73.0  & 5.4   & 48.2  & 2.7  \\
Ring-GNN (w/ degree) $\ddagger$  &   -    &   -    &   -    &  73.3   & 4.9  & 51.3  & 4.2 \\
\hline
\end{tabular}
\vspace{5pt}
\caption{\textbf{(left)} Accuracy of different GNNs at classifying CSL graphs(see Section \ref{cslexp}). We report the best and worst performances among 10 experiments.
\textbf{(right)} Accuracy of different GNNs in classication tasks on the two IMDB datasets (see Section \ref{sec.imbdb}). We report the best performance among all 350 epochs on 10-fold cross-validation, following in \cite{xu2018powerful}. 
$\dagger$: Reported in \cite{murphy2019relational}, \cite{xu2018powerful} and \cite{maron2018invariant}. 
$\ddagger$: On the IMDB datasets, both GIN and the Ring-GNN (w/ degree) take the node degrees as input node features (see Section \ref{sec.imbdb}). 
}
\label{table.synthetic}
\end{table}

\subsection{IMDB datasets} \label{sec.imbdb}
We use the two IMDB datasets (IMDB-B and IMDB-M)\footnote{Link: \url{https://github.com/weihua916/powerful-gnns/blob/master/dataset.zip}.} to test different models in real-world social networks. Since our focus is on distinguishing graph structures, these datasets are convenient as they do not contain node or edge features \cite{yanardag2015deep}.
IMDB-B has 1000 graphs, with 19.8 nodes per graph on average and 2 class labels. IMDB-M has 1500 graphs, with 13.0 nodes per graph on average and 3 class labels. 
Both datasets are randomly partitioned into training and validation sets with ratio $9 : 1$. 
As these two datasets have no informative node features, GIN uses one-hot encoding of the node degrees as input node features, while the other baseline models treat all nodes as having identical features. For a fairer comparison, we apply two versions of Ring-GNN: the first one treats all nodes as having identical input features, denoted as \emph{Ring-GNN}; the second one uses the node degree as input features (though not via one-hot encoding, due to computational constraints, but simply as one integer per node), denoted as \emph{Ring-GNN w/ degree}. All models are evaluated via 10-fold cross validation and the best accuracy is calculated through averaging across folds and then maximizing along epochs, following \cite{xu2018powerful}. Table \ref{table.synthetic} shows that the Ring-GNN models achieve higher or similar performance compared to $2$-IGN on both datasets, and slightly worse performance compared to GIN. 

\subsection{Other real-world datasets}
We perform further experiments on four other real-world datasets for classification tasks, including a social network dataset, COLLAB, and three bioinformatics datasets, MUTAG, PTC, PROTEINS\footnote{Same link as above.}~\cite{yanardag2015deep}. The experiment setup (10-fold cross validation, training/validation split) is identical to that of the IMDB datasets, except that all the bioinformatics datasets contain node features, and more details of hyperparameters are included in Appendix~\ref{archi}. As shown in Table \ref{tab.other}, Ring-GNN outperforms $2$-IGN in all four datasets, and outperforms GIN in one out of the four datasets. 

\begin{table}[ht]
\centering
\begin{tabular}{l|l|l|l|l}
\hline
            & COLLAB & MUTAG & PTC   & PROTEINS \\ \hline
Ring-GNN     & 80.1$\pm$1.4   & 86.8$\pm$6.4  & 65.7$\pm$7.1  & 75.7$\pm$2.9     \\ \hline
GIN $\dagger$          & 80.2$\pm$1.9   & 89.4$\pm$5.6  & 64.6$\pm$7.0  & 76.2$\pm$2.8     \\ \hline
$2$-IGN$\dagger$  & 77.9 $\pm$ 1.7  & 84.6$\pm$10.0 & 59.5$\pm$7.3 & 75.2$\pm$4.3   \\ \hline
\end{tabular}
\vspace{5pt}
\caption{Accuracy of different GNNs evaluated on several other real-world datasets.  We report the best performance among all epochs on 10-fold cross-validation. $\dagger$: Reported in \cite{xu2018powerful} and \cite{maron2018invariant}. }
\label{tab.other}
\end{table}

\section{Conclusions}
In this work, we address the important question of organizing the fast-growing zoo 
of GNN architectures in terms of what functions they can and cannot represent. 
We follow the approach via graph isomorphism tests and show that is equivalent 
to the classical perspective of function approximation. 
We leverage the theoretical insights to augment the $2$-IGN model
with the ring of operators associated with matrix multiplication, which gives provable gains in expressive power with tractable computational complexity and is amenable to further efficiency improvements by leveraging sparsity in the graphs. 

Our general framework leaves many interesting questions unresolved. First, our current GNN taxonomy is still incomplete. Second, we need a deeper analysis on which elements of the ring of operators created in Ring-GNN are really relevant for different types of applications.
Finally, beyond strict graph isomorphism, a natural next step is to consider weaker metrics in the space of graphs, such as the Gromov-Hausdorff distance,
which could better reflect the requirements on the stability of powerful graph representations to small graph perturbations in real-world applications \cite{gama2019stability}. 

\paragraph{Acknowledgements} We thank Haggai Maron and Thomas Kipf for fruitful discussions that pointed us towards Invariant Graph Networks as powerful models to study representational power in graphs. We thank Sean Disar\`{o} for pointing out errors in an earlier version of this paper.
We also thank Michael M. Bronstein for supporting this research with computing resources.
This work was partially supported by NSF grant RI-IIS 1816753, NSF CAREER CIF 1845360, the Alfred P. Sloan Fellowship, Samsung GRP and Samsung Electronics.
SV was partially funded by EOARD FA9550-18-1-7007 and the Simons Collaboration Algorithms and Geometry.

\newpage
\bibliographystyle{plain}
\bibliography{ref}

\newpage
\appendix
\section{Proofs for Section~\ref{sec:giso_ua}}
\label{app.universal}
\subsection{Proof of Theorem~\ref{PD2UAfin}}
\label{app:pf_thm2}
Theorem~\ref{PD2UAfin} is a consequence of the two following lemmas.
\begin{lemma}
\label{lemma1}
    If $\mathcal{C}$ is GIso-discriminating, then for all $G \in \Gfun$, there exists a function $\tilde{h}_G \in \mathcal{C}^{+1}$ such that for all $G', \tilde{h}_G(G') = 0$ if and only if $G \simeq G'$.
\end{lemma}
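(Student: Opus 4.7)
The plan is to exploit the finiteness of $\mathcal{X}^{n\times n}$. Since $\mathcal{X}$ is finite, the set $\mathcal X^{n\times n}$ has only finitely many isomorphism classes; enumerate those distinct from $[G]$ as $[G^{(1)}], \ldots, [G^{(K)}]$, with fixed representatives $G^{(1)}, \ldots, G^{(K)}$. For each $i\in\{1,\ldots,K\}$, the assumption that $\mathcal C$ is GIso-discriminating supplies some $h_i \in \mathcal C$ with $h_i(G) \neq h_i(G^{(i)})$; note that $h_i$ is permutation-invariant, so $h_i$ distinguishes $G$ from every graph in the class $[G^{(i)}]$.

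Next I would construct the candidate function by measuring how far $G'$ is from $G$ in each of these finitely many coordinates. Define
\[
\tilde h_G(G') \;=\; \sum_{i=1}^{K} \bigl|h_i(G') - h_i(G)\bigr|.
\]
Using the ReLU identity $|x| = \sigma(x) + \sigma(-x)$ with $\sigma(t) = \max(0,t)$, this can be written as
\[
\tilde h_G(G') \;=\; \sum_{i=1}^{K} \Bigl[\sigma\bigl(h_i(G') - h_i(G)\bigr) + \sigma\bigl(h_i(G) - h_i(G')\bigr)\Bigr],
\]
which is a single-hidden-layer ReLU network applied to the vector $[h_1(G'),\ldots,h_K(G')]$ with weights $\pm 1$ and biases $\mp h_i(G)$ (constants, since $G$ is fixed). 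Hence $\tilde h_G \in \mathcal C^{+1}$, and it is permutation-invariant because each $h_i$ is.

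It remains to check the "if and only if": clearly if $G' \simeq G$, then $h_i(G') = h_i(G)$ for every $i$ by invariance, so $\tilde h_G(G') = 0$. Conversely, if $G' \not\simeq G$, then $G'$ belongs to some class $[G^{(j)}]$ with $j \geq 1$, so $h_j(G') = h_j(G^{(j)}) \neq h_j(G)$, forcing $\tilde h_G(G') > 0$.

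I do not expect any serious obstacle here. The only point that requires care is bookkeeping: one must invoke GIso-discrimination once per non-isomorphic class (a finite task thanks to $|\mathcal X| < \infty$) rather than once per non-isomorphic graph, and one must verify that wrapping the outputs in the map $x \mapsto \sigma(x-c) + \sigma(c-x)$ genuinely fits the one-hidden-layer template that defines the augmentation $\mathcal{C}^{+1}$. Both are routine.
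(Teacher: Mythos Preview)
Your proposal is correct and follows essentially the same approach as the paper: both construct $\tilde h_G$ as a sum of terms $|h_i(G')-h_i(G)|$, realized via the ReLU identity $|x|=\sigma(x)+\sigma(-x)$, and then verify the biconditional. The only cosmetic difference is that the paper indexes the sum over all graphs $G'\not\simeq G$ rather than over one representative per isomorphism class as you do; your indexing is slightly tidier but the argument is otherwise identical.
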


\begin{lemma}
\label{lemma2}
Let $\mathcal C$ be a class of permutation-invariant functions from $\Gfun$ to $\mathbb R$ so that for all $G \in \Gfun$, there exists $\tilde{h}_G \in \mathcal{C}$ satisfying $\tilde{h}_G(G') = 0$ if and only if $G \simeq G'$. 
Then  $\mathcal{C}^{+1}$ is universally approximating.
\end{lemma}

\subsubsection{Proof of Lemma \ref{lemma1}}
Given $G, G' \in \Gfun$ with $G \not\simeq G'$, we let $h_{G, G'} \in \mathcal{C}$ be a function that distinguishes this pair, i.e. $h_{G, G'} (G) \neq h_{G, G'} (G')$. Then, we define a function $\overline{h}_{G, G'}$ by $ \overline{h}_{G, G'}(\cdot) = |h_{G, G'}(\cdot) - h_{G, G'}(G)| $.
We see that $\forall G'' \in \Gfun$, if $G'' \simeq G$, then $h_{G, G'}(G'') = h_{G, G'}(G)$, and so $\overline{h}_{G, G'}(G'') = 0$; if $G'' \simeq G'$, then $\overline{h}_{G, G'}(G'') > 0$; otherwise, $\overline{h}_{G, G'}(G'') \geq 0$.

Next, we define a function $\tilde{h}_G$ by $\tilde{h}_G(\cdot) = \mathop{\sum_{G' \in \Gfun, G' \not\simeq G}} \overline{h}_{G, G'}(\cdot)$. We see that if $G'' \simeq G$, we have $\tilde{h}_G(G'') = 0$; if $G'' \not\simeq G$, then  $\tilde{h}_G(G'') > 0$.

Thus, it is left to show that $\tilde{h}_G \in \mathcal{C}^{+1}$. Following the definition of the augmented collection, we choose $d$ to be the cardinality of the set $\{ G' \in \Gfun: G \not\simeq G' \}$ and select the subset of $d$ functions, $\{h_{G, G'}\}_{G' \in \Gfun, G \not\simeq G'}$, from $\mathcal{C}$. Since
\begin{equation*}
\label{hbar}
\begin{split}
    \overline{h}_{G, G'}(\cdot) &= \max(h_{G, G'}(\cdot) - h_{G, G'}(G), 0) + \max(h_{G, G'}(G) - h_{G, G'}(\cdot), 0) 
\end{split}
\end{equation*}

we see that each $\overline{h}_{G, G'}(\cdot)$ can be obtained from $h_{G, G'}(\cdot)$ by passing through one ReLU layer. Therefore, $\tilde{h}_G \in \mathcal{C}^{+1}$.
\qed

\subsubsection{Proof of Lemma~\ref{lemma2}}
In the setting of a finite feature space, we can in fact obtain a stronger result: for all $f$ that is permutation-invariant, $f \in \mathcal{C}^{+1}$, which means no approximation is needed.

First, for every $G \in \Gfun$, we can use each $\tilde{h}_G$ to construct the indicator function $\mathds{1}_{\simeq G}$. To achieve this, as $\Gfun$ is finite, we let $\delta_G = \frac{1}{2} \min_{G' \in \Gfun, G' \not\simeq G} |\tilde{h}_G(G')| > 0$. We then introduce a ``bump'' function, $\psi_{a, b}: \mathbb{R} \to \mathbb{R}$ with parameters $a$ and $b$, defined by $\psi_{a, b}(x) \coloneqq \psi((x-b)/a)$, where $\psi(x) \coloneqq \max(x-1, 0) + \max(x+1, 0) - 2 \max(x, 0)$. We see that $\psi_{a, b}$ is nonnegative, $\psi_{a, b}(b) = 0$, and $\psi_{a, b} > 0$ only on $(b-a, b+a)$. Hence, $\psi_{\delta_G, 0}(\tilde{h}_G(\cdot))$ equals the indicator function $\mathds{1}_{\simeq G}$ on $\Gfun$.

Given any function $f$ that is permutation-invariant, as the input space $\Gfun$ is finite, we can decompose it as
$f(\cdot) = \sum_{G \in \Gfun} \frac{f(G)}{|\mathcal{E}(G)|} \mathds{1}_{\simeq G}(\cdot)$.
Thus, $f$ can be realized in $\mathcal{C}^{+1}$, as each ``$f(G)$'' on the right-hand side is treated as a constant in the neural network.
\qed

\subsection{Proof of Theorem \ref{ua2pdinf}}
\label{app:pf_thm3}
$\forall G_1, G_2 \in \Gfun$, if $G_1 \not\simeq G_2$, define a function $f_1(G) = \min_{\Pi \in S_n} \| G_1 - \Pi^\intercal \cdot G \cdot \Pi \|_{F}$. It is a continuous and permutation-invariant function on $K$, and therefore can be approximated by a function $h \in \mathcal{C}$ to within $\epsilon \coloneqq \frac{1}{2}f_1(G_2) > 0$ accuracy. Then $h$ is a function that can discriminate between $G_1$ and $G_2$. \qed

\subsection{Proof of Theorem~\ref{thm:4}}
\label{app:pf_thm4}
\begin{definition}
\label{locate}
Let $\mathcal{C}$ be a class of functions $\Gfun \to \mathbb{R}$. We say it is able to \textbf{locate every isomorphism class} if for all $G \in \Gfun$ and for all $\epsilon > 0$ there exists $h_G \in \mathcal{C}$ such that:
\begin{itemize}
    \item for all $G' \in \Gfun, h_G(G') \geq 0$;
    \item for all $G' \in \Gfun$, if $G' \simeq G$, then $h_G(G') = 0$; and
    \item there exists $\delta_G > 0$ such that for all $G' \in K$, if $h_G(G') < \delta_G$, then $\exists \Pi \in S_n$ such that $\| \Pi^\intercal \cdot G' \cdot \Pi - G \|_F < \epsilon$.
\end{itemize}
\end{definition}

Then, Theorem~\ref{thm:4} is a consequence of the two following lemmas.
\begin{lemma} \label{lemma.C+1}
Let $\mathcal{C}$ be a collection of continuous permutation-invariant functions from $\Gfun$ to $\mathbb{R}$. If $\mathcal{C}$ is GIso-discriminating, then $\mathcal{C}^{+1}$ is able to locate every isomorphism class.
\end{lemma}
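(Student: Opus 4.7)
The plan is to build, for any $G \in K$ and $\varepsilon > 0$, a single function $h_G \in \mathcal{C}^{+1}$ of the form
\[ h_G(G'') \;=\; \sum_{i=1}^k \bigl[\sigma(f_i(G'') - f_i(G)) + \sigma(f_i(G) - f_i(G''))\bigr] \;=\; \sum_{i=1}^k |f_i(G'') - f_i(G)|, \]
where $\sigma$ is ReLU and $f_1,\dots,f_k\in\mathcal{C}$ are selected by GIso-discriminating plus a compactness argument. Such a sum is visibly nonnegative, vanishes on the orbit of $G$ by permutation-invariance of each $f_i$, and lives in $\mathcal{C}^{+1}$: a single hidden layer of width $2k$ with rows $\pm e_i$ and biases $\mp f_i(G)$, followed by a summing output weight, realises the expression exactly. (The key identity $|x-c|=\sigma(x-c)+\sigma(c-x)$ keeps us in depth one rather than two.)

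To select the $f_i$'s I would fix $\varepsilon>0$ and define the $\varepsilon$-orbit-neighborhood $O_G^{\varepsilon}:=\{G''\in K : \exists \pi\in S_n,\; d(\pi^{\intercal} G''\pi, G)<\varepsilon\}$, which is a finite union of open balls and hence open in $K$; its complement $A:=K\setminus O_G^{\varepsilon}$ is a closed subset of the compact $K$, so $A$ is compact, and every $G'\in A$ satisfies $G'\not\simeq G$. By GIso-discriminating, for each $G'\in A$ there exists $f_{G'}\in\mathcal{C}$ with $f_{G'}(G')\neq f_{G'}(G)$; continuity of $f_{G'}$ then yields an open neighborhood $U_{G'}\subset K$ of $G'$ and a constant $c_{G'}>0$ (e.g.\ $c_{G'} = \tfrac12|f_{G'}(G')-f_{G'}(G)|$) such that $|f_{G'}(\cdot)-f_{G'}(G)|>c_{G'}$ on $U_{G'}$. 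The family $\{U_{G'}\}_{G'\in A}$ covers the compact set $A$, so extract a finite subcover $\{U_{G'_1},\dots,U_{G'_k}\}$ and set $f_i:=f_{G'_i}$ with constants $c_i$.

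Checking the three clauses of Definition~\ref{locate} is then immediate: nonnegativity and vanishing on the isomorphism class of $G$ hold termwise. For the third clause, given any $G''\in A$ pick $j$ with $G''\in U_{G'_j}$; then $h_G(G'')\ge |f_j(G'')-f_j(G)|>c_j$, so with $\delta_G:=\min_{1\le i\le k} c_i>0$ we get $h_G(G'')<\delta_G \Rightarrow G''\notin A \Rightarrow G''\in O_G^{\varepsilon}$, meaning some permutation of $G''$ is within Euclidean distance $\varepsilon$ of $G$.

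The main obstacle I expect is the depth bookkeeping: the natural $\max_i$ or $\min_i$ aggregation of the individual separators would cost an extra ReLU layer and only land in $\mathcal{C}^{+2}$. Replacing it by a sum of exact one-layer absolute values, as above, is what lets the construction fit inside $\mathcal{C}^{+1}$ exactly. Everything else — compactness of $A$, extraction of a finite subcover, and permutation-invariance of the resulting composition — is routine once continuity of the elements of $\mathcal{C}$ is invoked.
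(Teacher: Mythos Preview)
Your argument is correct and follows essentially the same blueprint as the paper: obtain continuous separators from GIso-discrimination, use compactness to cut down to finitely many, and sum one-hidden-layer ReLU pieces to land in $\mathcal{C}^{+1}$. The packaging differs only slightly: the paper covers all of $K$ (using $\epsilon$-balls around the orbit together with separator-preimages) and sums tent-shaped bumps $\max\bigl(\tfrac{2}{3}|h_{G,G'}(G)-h_{G,G'}(G')|-|h_{G,G'}(\cdot)-h_{G,G'}(G')|,\,0\bigr)$, whereas you cover only the compact complement $A=K\setminus O_G^{\varepsilon}$ and sum the simpler absolute values $|f_i(\cdot)-f_i(G)|$; both building blocks are realizable with a single ReLU layer, but your choice makes the depth-one claim more transparent.
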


\begin{lemma} \label{lemma.locate.approx}
Let $\mathcal{C}$ be a class of continuous permutation-invariant functions $\Gfun \to \mathbb{R}$. 
If $\mathcal{C}$ is able to locate every isomorphism class, then $\mathcal{C}^{+1}$ is universally approximating.
\end{lemma}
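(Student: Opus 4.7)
The plan is to factor the approximation through the locate property: use finitely many functions in $\mathcal{C}$ to embed $K$ into a compact subset of $\mathbb{R}^N$ so that the image retains enough information about the isomorphism class of $G$, then invoke the standard universal approximation theorem for 2-layer ReLU MLPs on that image to read off the values of $f$. Because reading off amounts to applying a 2-layer MLP on top of functions from $\mathcal{C}$, the resulting approximator lies in $\mathcal{C}^{+2}$ by definition.

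Assuming $f$ is continuous, given $\epsilon > 0$ I would first use compactness of $K$ to fix $\eta > 0$ with $d(G_1, G_2) < \eta \Rightarrow |f(G_1) - f(G_2)| < \epsilon/2$. For each $G \in K$, apply Definition~\ref{locate} with parameter $\eta$ to obtain $h_G \in \mathcal{C}$ and $\delta_G > 0$; since $h_G(G) = 0 < \delta_G$, the open sets $U_G := \{G' : h_G(G') < \delta_G\}$ cover $K$, and by compactness I extract a finite subcover indexed by $G_1, \ldots, G_N$ with associated functions $h_1, \ldots, h_N \in \mathcal{C}$ and thresholds $\delta_1, \ldots, \delta_N$.

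Define $\Phi : K \to \mathbb{R}^N$ by $\Phi(G) = (h_1(G), \ldots, h_N(G))$ and the soft bumps $\phi_i(G) = \max(0,\, 1 - h_i(G)/\delta_i)$, each a single-ReLU function of one coordinate of $\Phi$. Then $\phi_i(G) > 0$ iff $G \in U_{G_i}$, so the finite cover forces $\sum_i \phi_i(G) > 0$ everywhere, and by continuity and compactness in fact $\sum_i \phi_i(G) \geq s_{\min} > 0$ uniformly. On a compact neighborhood of $\Phi(K)$ where the denominator is bounded away from zero, set
\[
g(v_1, \ldots, v_N) \;=\; \frac{\sum_{i=1}^N f(G_i)\,\max(0,\, 1 - v_i/\delta_i)}{\sum_{i=1}^N \max(0,\, 1 - v_i/\delta_i)}.
\]
Then $g \circ \Phi$ is, at each $G$, a convex combination of the values $\{f(G_i) : \phi_i(G) > 0\}$. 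For each such index $i$, the locate condition furnishes $\pi_i \in S_n$ with $d(\pi_i(G), G_i) < \eta$, so permutation-invariance of $f$ and uniform continuity give $|f(G) - f(G_i)| = |f(\pi_i(G)) - f(G_i)| < \epsilon/2$. Consequently $|g(\Phi(G)) - f(G)| < \epsilon/2$ uniformly in $G \in K$.

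Finally, since $g$ is continuous on the compact set $\Phi(K) \subset \mathbb{R}^N$, standard universal approximation for 2-layer feed-forward ReLU networks yields $\mathcal{NN}$ with $|\mathcal{NN}(v) - g(v)| < \epsilon/2$ on $\Phi(K)$; the composite $G \mapsto \mathcal{NN}(h_1(G), \ldots, h_N(G))$ lies in $\mathcal{C}^{+2}$ and approximates $f$ to within $\epsilon$ by the triangle inequality. The main obstacle is the normalizing denominator in $g$: a 2-layer MLP cannot compute division exactly, but because the denominator is uniformly bounded below on $K$ and $\Phi(K)$ is compact, $g$ is merely a continuous function on a compact set and standard MLP universality absorbs the nonlinearity without an explicit inverse gate. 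A secondary point to verify carefully is the layer-counting convention, so that the ``2-layer'' universality being invoked genuinely matches the paper's definition of $\mathcal{NN}$ in $\mathcal{C}^{+2}$.
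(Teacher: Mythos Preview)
Your proof is correct and follows essentially the same partition-of-unity approach as the paper: extract a finite cover from the locate property, build ReLU bump functions $\phi_i$ supported on the cover, and approximate $f$ by the normalized weighted average $\sum f(G_i)\phi_i/\sum\phi_i$. The only cosmetic difference is that the paper splits the added MLP into two explicit one-hidden-layer stages (first compute the bumps, then approximate the division), whereas you package the entire map $g$ as a single continuous function on the compact set $\Phi(K)$ and invoke the standard universal approximation theorem; both routes land in $\mathcal{C}^{+2}$.
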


\subsubsection{Proof of Lemma \ref{lemma.C+1}}
Given $G^* \in \Gfun$ and $\epsilon > 0$, we will construct a function $h_{G^*}$ as desired in Definition~\ref{locate}. Since $\mathcal{C}$ is pairwise distinguishing, we know that $\forall G \in \Gfun$ such that $G \not\simeq G^*$, $\exists h_{G, G^*} \in \mathcal{C}$ such that $h_{G, G^*}(G) \neq h_{G, G^*}(G^*)$. For each such $G$, we define a set $A_{G, G^*}$ by
\begin{equation*}
A_{G, G^*} = \{ G' \in \Gfun : \left | h_{G, G^*}(G') - h_{G, G^*}(G) \right | < \frac{1}{2} |h_{G, G^*}(G) - h_{G, G^*}(G^*)| \}
\end{equation*}
Clearly, for each $G \not\simeq G^*$, $G \in A_{G, G^*}$ while $G^*$ does not. In addition, since $h_{G, G^*}$ is assumed to be continuous, $A_{G, G^*}$ is an open set. Meanwhile, for any $G \simeq G^*$, we define $A_{G, G^*} = B(G, \epsilon)$, the open $\epsilon$-ball centered at $G$ under the Euclidean distance. Thus, $\{ A_{G, G^*} \}_{G \in \Gfun}$ is an open cover of $\Gfun$. Since $\Gfun$ is compact, $\exists$ a finite subset $\Gfun_0$ of $\Gfun$ such that $\{ A_{G_0, G^*} \}_{G_0 \in \Gfun_0}$ also covers $\Gfun$. 
Hence, $\forall G \in \Gfun, \exists G_0 \in \Gfun_0$ such that $G \in A_{G_0, G^*}$. 

Next, we define a function $h_{G^*}$ by setting $h_{G^*}(G) = \sum_{G_0 \in \Gfun_0 \setminus \mathcal{E}(G^*)} \overline{h}_{G_0, G^*}(G), \forall G \in \Gfun$, where 
\begin{equation*}
\begin{split}
    \overline{h}_{G_0, G^*}(G) =& |h_{G_0, G^*}(G) - h_{G_0, G^*}(G^*)| \\
    =& \max(h_{G_0, G^*}(G) - h_{G_0, G^*}(G^*), 0) + \max(h_{G_0, G^*}(G^*) - h_{G_0, G^*}(G), 0)
\end{split}
\end{equation*}
Since each $h_{G_0, G^*}$ in continuous, $h_{G^*}$ is also continuous. Moreover, we can show that $h_{G^*}$ satisfies the desired properties described in Definition \ref{locate}:
\begin{itemize}
    \item Each $\bar{h}_{G_0, G^*}$ is non-negative, and hence $h_{G^*}$ is non-negative on $\Gfun$.
    \item If $G \simeq G^*$, then since each $h_{G_0, G^*}$ is permutation-invariant, there is $h_{G_0, G^*}(G^*) = h_{G_0, G^*}(G)$, and hence $\overline{h}_{G_0, G^*}(G) = 0$, $\forall G_0 \in \Gfun_0 \setminus \mathcal{E}(G^*)$. Therefore, $h_{G^*}(G) = 0$.
    \item Define $\delta_{G^*} = \frac{1}{2} \min_{G_0 \in \Gfun_0 \setminus \mathcal{E}(G^*)} |h_{G_0, G^*}(G_0) - h_{G_0, G^*}(G^*)|$.
    Suppose that $\forall \Pi \in S_n, \| \Pi^\intercal \cdot G \cdot \Pi - G^*\|_F \geq \epsilon$. This means that $G \neq \cup_{G' \in \mathcal{E}(G^*)} B(G', \epsilon) = \cup_{G' \in \mathcal{E}(G^*)} A_{G', G^*}$. Since $\{ A_{G_0, G^*}\}_{G_0 \in \Gfun_0}$ is a cover for $\Gfun$, we know that 
    $\exists G_0 \in \Gfun_0 \setminus \mathcal{E}(G^*)$ such that $G \in A_{G_0, G^*}$, and thus $|h_{G_0, G^*}(G) - h_{G_0, G^*}(G_0)| < \frac{1}{2} |h_{G_0, G^*}(G_0) - h_{G_0, G^*}(G^*)|$. Therefore, 
    \begin{equation*}
    \begin{split}
        \overline{h}_{G_0, G^*}(G) =& |h_{G_0, G^*}(G) - h_{G_0, G^*}(G^*)| \\
        \geq & |h_{G_0, G^*}(G_0) - h_{G_0, G^*}(G^*)| - |h_{G_0, G^*}(G) - h_{G_0, G^*}(G_0)| \\
        \geq & \frac{1}{2} |h_{G_0, G^*}(G_0) - h_{G_0, G^*}(G^*)| \\
        \geq & \delta_{G^*}~,
    \end{split}
    \end{equation*}
    and hence $h_{G^*}(G) \geq \delta_{G^*}$.
    Hence, in other words, if $G$ satisfies $h_{G^*}(G) < \delta_{G^*}$, then it must hold that $G \in \bigcup_{G' \in \mathcal{E}(G^*)} A_{G', G^*} = \bigcup_{\Pi \in S_n} B(\Pi^\intercal \cdot G \cdot \Pi, \epsilon)$, implying that $\exists \Pi \in S_n$ such that $\| \Pi^\intercal \cdot G \cdot \Pi - G^*\|_F < \epsilon$.
\end{itemize}
Finally, it is clear that that $h_{G^*}$ can be represented in $\mathcal{C}^{+1}$. \qed

\subsubsection{Proof of Lemma \ref{lemma.locate.approx}}
Consider any $f$ that is continuous and permutation-invariant. Since $\Gfun$ is compact, $f$ is uniformly continuous on $\Gfun$. This implies that $\forall \epsilon > 0, \exists r > 0$ such that $\forall G_1, G_2 \in \Gfun$, if $\| G_1 - G_2 \|_F < r$, then $|f(G_1) - f(G_2)| < \epsilon$. Moreover, the permutation-invariance of $f$ implies that $\forall \epsilon > 0, \exists r > 0$ such that $\forall G_1, G_2 \in \Gfun$, if $\exists \Pi \in S_n$ such that $\| \Pi^\intercal \cdot G_1 \cdot \Pi - G_2 \|_F < r$, then $|f(G_1) - f(G_2)| < \epsilon$.

Given any $G \in \Gfun$, since $\mathcal{C}$ is able to locate every isomorphism class, we can choose a function $h_G \in \mathcal{C}$ as in Definition \ref{locate}. We use $h_G^{-1}(a)$ to denote $h_G^{-1}([0, a))$. Then $\exists \delta_G$ such that $h_G^{-1}(\delta_G) \subseteq \cup_{\Pi \in S_n} B(\Pi^\intercal \cdot G \cdot \Pi, r)$, where $B(G, r)$ is the ball in $\Gfun$ centered at $G$ with radius $r$ in Euclidean distance. Since $h_G$ is continuous, $h_G^{-1}(\delta_G)$ is open. Therefore, $\{ h_G^{-1}(\delta_G) \}_{G \in \Gfun}$ is an open cover of $\Gfun$. Because $\Gfun$ is compact, $\exists$ a finite subset $\Gfun_0 \subseteq \Gfun$ such that $\{ h_{G_0}^{-1}(\delta_{G_0}) \}_{G \in \Gfun_0}$ also covers $\Gfun$.

For each $G_0 \in \Gfun_0$, we construct another function $\varphi_{G_0}$ on $\Gfun$ by defining, $\forall G \in \Gfun$,
\begin{equation*}
    \varphi_{G_0}(G) = \max\{ \delta_{G_0} - h_{G_0}(G), 0 \}~.
\end{equation*}
It is clear that $\text{supp}(\varphi_{G_0}) = h_{G_0}^{-1}(\delta_{G_0})$. Next, for each $G_0 \in \Gfun_0$, we construct another function $\psi_{G_0}$ on $\Gfun$ by defining, $\forall G \in \Gfun$,
\begin{equation*}
\label{eq:psi_G0}
    \psi_{G_0}(G) = \frac{\varphi_{G_0}(G)}{\sum_{G' \in \Gfun_0} \varphi_{G'}(G)}
\end{equation*}
Since $\{ h_{G_0}^{-1}(\delta_{G_0}) \}_{G_0 \in \Gfun_0}$ covers $\Gfun$, we know that $\forall G \in \Gfun$,  $\exists G_0 \in \Gfun_0$ such that $G \in h_{G_0}^{-1}(\delta_{G_0})$, which implies that $\varphi_{G_0}(G) > 0$, and therefore the denominator in \eqref{eq:psi_G0} is positive. Thus, for each $G_0 \in \Gfun_0$, $\psi_{G_0}$ is a well-defined function on $\Gfun$, and $\text{supp}(\psi_{G_0}) = \text{supp}(\varphi_{G_0}) = h_{G_0}^{-1}(\delta_{G_0})$. Moreover, $\forall G \in \Gfun, \sum_{G_0 \in \Gfun_0} \psi_{G_0}(G) = 1$. In other words, the set of functions $\{ \psi_{G_0} \}_{G_0 \in \Gfun_0}$ is a \emph{partition of unity} on $\Gfun$ with respect to the open cover $\{ h_{G_0}^{-1}(\delta_{G_0}) \}_{G_0 \in \Gfun_0}$.

This implies that $\forall G \in \Gfun$, there is
\begin{equation*}
    f(G) = f(G) \left (\sum_{G_0 \in \Gfun_0}  \psi_{G_0}(G) \right ) = \mathop{\sum_{G_0 \in \Gfun_0}} f(G) \psi_{G_0}(G)
\end{equation*}
Meanwhile, if $G \in h_{G_0}^{-1}(\delta_{G_0})$, then $\exists \Pi \in S_n$ such that $\| \Pi^\intercal \cdot G \cdot \Pi - G_0 \|_F < r$, which implies that $|f(G) - f(G_0)| < \epsilon$. Hence, if we construct a function $\hat{f}$ on $\Gfun$ by defining, $\forall G \in \Gfun$,
\begin{equation}
\label{eq:hatf}
    \hat{f}(G) = \sum_{G_0 \in \Gfun_0} f(G_0) \psi_{G_0}(G) = \mathop{\sum_{G_0 \in \Gfun_0}} \frac{f(G_0) \varphi_{G_0}(G)}{\sum_{G' \in \Gfun_0} \varphi_{G'}(G)}
\end{equation}
we see that 
\begin{equation}
    \sup_{G \in \Gfun} |f(G) - \hat{f}(G)| < \epsilon~,
\end{equation}
because $\forall G \in \Gfun$,
\begin{equation}
    \begin{split}
        |f(G) - \hat{f}(G) | \leq & \mathop{\sum_{G_0 \in \Gfun_0}}_{G \in h_{G_0}^{-1}(\delta_{G_0})} \left | f(G) - f(G_0) \right | \psi_{G_0}(G) \\
        =& \mathop{\sum_{G_0 \in \Gfun_0}}_{G \in h_{G_0}^{-1}(\delta_{G_0})} \epsilon \psi_{G_0}(G) \\
        <& \epsilon
    \end{split}
\end{equation}

Finally, we show how to approximate $\hat{f}$ with functions from $\mathcal{C}$ augmented with a feed-forward neural network. Given any input graph $G$, applying each of $\{h_{G_0}\}_{G_0 \in \Gfun_0} \subseteq \mathcal{C}$ to $G$ yields a vector of size $|\Gfun_0|$. Moreover, we see from \eqref{eq:hatf} that $\hat{f}(G)$ can be viewed as a continuous function of this vector. Hence, by the universal approximation theorem \citep{cybenko1989approximation, hornik1991hornik}, $\hat{f}(G)$ can be approximated to arbitrary accuracy by a feed-forward neural network applied to this vector that is sufficiently wide. \qed

\section{Proofs for Section \ref{sec.sigma}} \label{sec.proofs.reformulating}
\subsection{Proof of Theorem \ref{teo5}}
If $\mathcal{C}$ is GIso-discriminating, then given a $G \in \Gfun$, $\forall G' \not \simeq G, \exists h_{G'} \in \mathcal{C}$ and $b_{G'} \in \mathbb{R}$ such that $\mathcal{E}(G) = \cap_{G' \not \simeq G} h_{G'}^{-1}(\{ b_G' \})$, which is a finite intersection  of sets in $\sigma(\mathcal{C})$. Hence, $\mathcal{E}(G) \in \sigma(f_G) \subseteq \sigma(\mathcal{C})$. Therefore, $\Qfun \subseteq \sigma(\mathcal{C})$, and hence $\sigma(\Qfun) \subseteq \sigma(\mathcal{C})$. Moreover, since $\sigma(g) \subseteq \sigma(\Qfun)$ for all $g \in \mathcal{C}$, there is $\sigma(\mathcal{C}) \subseteq \sigma(\Qfun)$
\qed

\subsection{Proof of Theorem \ref{teo6}}
Suppose not. This implies that $\Qfun \subsetneq \sigma(\mathcal{C})$, and hence $\exists \tau = \mathcal{E}(G) \in \Qfun$ such that $\tau \notin \sigma(\mathcal{C})$. Note that $\tau$ is an equivalence class of graphs that are isomorphic to each other. Then consider the smallest subset in $\sigma(\mathcal{C})$ that contains $\tau$, defined as $\displaystyle S(\tau) = \mathop{\bigcap_{T \in \sigma(\mathcal{C})}}_{\tau \subseteq T} T.$

Since $K$ is a finite space, $\sigma(\mathcal{C})$ is also finite, and hence this is a finite intersection. Since a sigma-algebra is closed under finite intersection, there is $S(\tau) \in \sigma(C)$. As $\tau \notin \sigma(\mathcal{C})$, we know that $\tau \subsetneq S(\tau)$. Then, $\exists G' \not\simeq G$ such that $G' \in S(\tau)$. Then there does not exist any function $h$ in $\mathcal{C}$ such that $h(G) \neq h(G')$, since otherwise the pre-image of some interval in $\mathbb{R}$ under $h$ will intersect with only $\mathcal{E}(G)$ but not $\mathcal{E}(G')$. Contradiction.
\qed

\section{Comparison of the expressive power of function families on graphs via the sigma-algebras} \label{app.comparison}
\subsection{Comparing sigma-algebras}
Given two classes of functions $\mathcal{C}_1, \mathcal{C}_2$, such as two classes of GNNs, there are four possibilities regarding their relative representation power, using the language of sigma-algebra developed in the main text:

\begin{itemize}
    \item Equivalent expressive power: $\sigma(\mathcal{C}_1) = \sigma(\mathcal{C}_2)$; 
    \item $\mathcal{C}_2$ is strictly more powerful: $\sigma(\mathcal{C}_1) \subsetneq \sigma(\mathcal{C}_2)$;
    \item $\mathcal{C}_1$ is strictly more powerful: $\sigma(\mathcal{C}_2) \subsetneq \sigma(\mathcal{C}_1)$;
    \item Not comparable: $\sigma(\mathcal{C}_1) \nsubseteq \sigma(\mathcal{C}_2)$ and $\sigma(\mathcal{C}_1) \nsubseteq \sigma(\mathcal{C}_2)$.
\end{itemize}

\subsection{Details of Figure~\ref{fig.diagram}}
\label{app:diagram}
In this section, we summarize some results from the literature (including relevant results known after the original publication of this work) that allow us to establish partial relationships among the expressive power of different GNNs architectures under the sigma-algebra framework introduced in Section \ref{sec.sigma} and above. For simplicity, here we assume that graphs are determined by the adjacency matrix $A$ and no node or edge features are included. The results below are illustrated in Figure \ref{fig.diagram}.

\paragraph{Spectral GNNs (sGNNs)} Let $\Omega$ denote a set of $n \times n$ matrices that represent linear operators on graph signals. 
In a spectral GNN model with $T$ layers, we set $d_0 = 1$ and $v^{(0)} = \mathds{1}_n \in \mathbb{R}^{n \times 1}$, and recursively compute
\begin{equation*}
    v^{(t+1)} \coloneqq \rho\left(\sum_{M\in \Omega} M v^{(t)} \theta_{M}^{(t)}\right) \in \mathbb{R}^{n \times d_{t+1}}~,
\end{equation*}
where $\theta_{M}^{(t)} \in \mathbb R^{d_t\times d_{t+1}}$ is a trainable parameter matrix at each layer $t$. The model finally outputs the entry-wise sum of the vector $v^{(T)}$.
This type of models was proposed to solve community detection tasks in graphs in \cite{chen2019cdsbm}, and has also been applied to solve the quadratic assignment problem \cite{nowak2017note}. 

In this context, for any positive integer $J$, we define $\Omega_J \coloneqq \{ I, D, \hat{A}_{(1)}, ..., \hat{A}_{(J)} \}$, where $I$ is the identity matrix, $D$ is the degree matrix of the graph, $A$ is the adjacency matrix of the graph, and we define $\hat{A}_{(j)} \coloneqq \min\{A^{2^j},1\}$, which is the adjacency matrix of the $j$th power graph of $G$.
We then write $\text{sGNN}_J$ for the spectral GNN model equipped with $\Omega = \Omega_J$.

The power graph adjacency matrices leverage multi-scale information in the graph, but note that they differ from simply taking the powers of the adjacency matrix, which is exploited in \cite{liao2018lanczosnet} for example. As mentioned in Section~\ref{sec:ringgnn}, the power graph adjacency matrices allow the model to distinguish regular graphs that cannot distinguished by the $1$-WL test, such as the Circular Skip Link graphs. 

See a later work \cite{wang2022powerful} for further discussions on the expressive power of spectrally-designed GNNs.

\paragraph{k-Weisfeiler-Lehman ($k$-WL)} For positive integers $k$, $k$-WL is an iterative algorithm for determining whether a pair of graphs is isomorphic \cite{weisfeiler1968reduction}. The higher $k$ is, the more powerful $k$-WL is in distinguishing non-isomorphic graphs. In particular, $2$-WL is as powerful as $1$-WL on graphs with no edge features, but $2$-WL, unlike $1$-WL, is able to take into account edge features. When $k=1$, $1$-WL has also been called the color refinement algorithm. We note that there is an alternative ``folklore'' definition of the WL algorithm, for which we refer the readers to \cite{cai1992optimal, maron2019provably}, for example.

\paragraph{Graph Isomorphism Network (GIN) and $k$-dimensional GNN ($k$-GNN)} GIN is a GNN model proposed in \cite{xu2018powerful}, where it is proved that GIN is as powerful as $1$-WL in distinguishing non-isomorphic graphs. A similar model and its generalizations to $k$-WL with $k > 1$ are proposed in \cite{morris2019higher}, called $k$-GNN.

\paragraph{$k$-Invariant Graph Networks ($k$-IGNs)} Proposed in \cite{maron2018invariant, maron2019universality}. In later works, it is proved that $k$-IGN is exactly as powerful as $k$-WL for both $k=2$ \cite{chen2020can} and $k > 2$ \cite{geerts2020expressive, geerts2022expressiveness}.

\paragraph{Provably Powerful Graph Network (PPGN)} The PPGN model is proposed in the concurrent work of \cite{maron2019provably}. It extends the $2$-IGN model through matrix multiplication steps in a way that highly resembles Ring-GNN. In addition, it is proved in \cite{maron2019provably} that PPGN (and hence Ring-GNN) are as powerful as $3$-WL.

\paragraph{Linear Programming (LP)} The graph isomorphism problem can be formulated through an optimization problem. Namely, if $G, G' \in \mathcal{X}^{n\times n}$ are the adjacency matrices of two graphs of size $n$, we consider the optimization problem
\begin{align*}
    \min_{\Pi} & \quad \| \Pi \cdot G - G' \cdot \Pi \|_1 \\
    \text{s.t.} & \quad \Pi \in S_n~.
\end{align*}
By the definition of graph isomorphism, we see that $G$ and $G'$ are isomorphic if and only if the minimum value is zero.
Then, one may consider an \emph{LP relaxation} of the optimization problem above:
\begin{align*}
    LP(G, G') \coloneqq \min_{\Pi} & \quad \| \Pi \cdot G - G' \cdot \Pi \|_1 \\
    \text{s.t.} & \quad \Pi \succeq 0,~ P \cdot \mathds{1}_n = P^\intercal \cdot \mathds{1}_n = \mathds{1}_n~.
\end{align*}
We say the two graphs are \emph{fractionally isomorphic} if $LP(G, G') = 0$.
Thus, the LP relaxation leads to a natural sigma algebra $\sigma(\cup_{G \in \Gfun } \{LP(G, \cdot) \}).$ 
It has been shown that two graphs are fractionally isomorphic if and only if they cannot be distinguished by 1-WL \cite{tinhofer1986graph, tinhofer1991note, ramana1994fractional}. 

\paragraph{Semidefinite Programming (SDP)} 
The SDP relaxation of the quadratic assignment problem \cite{zhao1998semidefinite} is always in the feasible set of the LP, and therefore LP is less expressive than SDP.
\paragraph{Sum-of-Squares (SoS) hierarchy} One can consider the hierarchy of relaxations coming from sum-of-squares (SoS). In the context of graph isomorphism, it is known that graph isomorphism is a hard problem for this hierarchy \cite{o2014hardness}. In particular the Lasserre/SoS hierarchy requires $2^{\Omega(n)}$ to solve graph isomorphism (in the same sense that $o(n)$-WL fails to solve graph isomorphism \cite{cai1992optimal}).

\paragraph{Spectrum($A$)} If we consider the function that takes a graph and outputs the set of eigenvalues of its adjacency matrix, such a function is permutation invariant. 
On one hand, certain regular graphs can be distinguished by their adjacency spectra but not by $1$-WL or $2$-WL. On the other hands, there are non-isomorphic graphs that can be distinguished by $1$-WL / $2$-WL but share the same adjacency spectrum (e.g., Figure 2 of \cite{ramana1994fractional}). Meanwhile, $3$-WL is known to be strictly more powerful than the adjacency spectrum \cite{furer2010combinatorial, rattan2023spectra}.

\section{Additional Discussions of Literature}
\label{app.probabilistic}
The article~\cite{bloemreddy2019probabilistic} provides a nice and general theoretical framework that establishes an equivalence between the functional and probabilistic perspectives to symmetry via noise outsourcing in both general and particular settings. Our framework belongs to the functional perspective to symmetry (in particular, $\mathbb{S}_{n_2}$-invariance), and an extension to the probabilistic perspective with ideas from \cite{bloemreddy2019probabilistic} would be quite interesting. The concept of orbits also applies in our setting, and the concept of maximal invariants is related to our definition of GIso-discriminating. However, a key distinction is that being a maximal invariant is a property of \emph{functions}, whereas we define GIso-discriminating to be a property of \textit{classes} of functions. Our definition is arguably more suitable for studying the representation power of different GNN architectures, and moreover makes it possible to relate graph isomorphism tests to function approximation. Furthermore, the theory developed in Section 4 allows us to rigorously compare the expressive power of classes of GNN functions when they are \emph{not} necessarily GIso-discriminating, which is another novel contribution.


\section{Theoretical Limitation of the $2$-IGN model} \label{app.Ginvariant}
\label{order_2_G_inv}

In this section, we prove Theorem \ref{prop.Ginvariant}, which states that $2$-IGNs cannot distinguish between non-isomorphic regular graphs with the same degree.

\subsection{Defining the $2$-IGN model}
\label{app:2ign}
Here, we state our definition of $2$-IGN based on the $G$-invariant networks defined in \cite{maron2019universality}.

Suppose $A \in \mathbb{R}^{n^k \times a}$ is a tensor containing information of a graph $G$, where each entry is associated with a $k$-tuple of nodes. Then $\forall \Pi \in S_n$, we use $\pi * A$ to denote the $\mathbb{R}^{n^k \times a}$ tensor obtained from $A$ by applying the permutation represented by $\Pi$ to the node set. For example, if $k=2$ and $A\in \mathbb{R}^{n \times n}$ is a matrix containing edge features (a simple example being the adjacency matrix), then $\Pi * A = \Pi^\intercal A \Pi$.

\begin{definition}
A function $f: \mathbb{R}^{n^k \times a} \to \mathbb{R}^b$ is \textbf{invariant} if $\forall A \in \mathbb{R}^{n^k \times a}, \forall \Pi \in S_n, f(\Pi * A) = f(A)$. A function $f': \mathbb{R}^{n^k \times a} \to \mathbb{R}^{n^l \times b}$ is \textbf{equivariant} if $\forall A \in \mathbb{R}^{n^k \times a}, \forall \Pi \in S_n, f'(\Pi * A) = \Pi * f(A)$. Note that invariance is a special case of equivariance when $l = 0$.
\end{definition}

\begin{definition}
For a positive integer $k$, a $k$-IGN parameterizes a function $F: \mathbb{R}^{n^{k_{0}} \times d_{0}} \to \mathbb{R}$ in the following way:
\[
F = m \circ h \circ L^{(T)} \circ \sigma \circ \dots \circ \sigma \circ L^{(1)},
\]
where each $L^{(t)}$ is a linear equivariant layer from $\mathbb{R}^{n^{k} \times d_{t-1}}$ to $\mathbb{R}^{n^k \times d_{i}}$, $\sigma$ is a pointwise activation function, $h$ is an invariant layer from $\mathbb{R}^{n^{k} \times d_{T}}$ to $\mathbb{R}$, and $m$ is an MLP.
\end{definition}

We will use $A^{(t)}$ to denote the output of the $t$th layer, for $t \in \{1, ..., T\}$, i.e., they are defined recursively by
\[
A^{(t+1)} = \sigma(L^{(t)}(A^{(t)}))~,
\]
where $A^{(0)} \in \mathbb{R}^{n \times n \times d_0}$ is the input to the model.

\subsection{Proof of Theorem \ref{prop.Ginvariant}}

In the definition of $2$-IGN in Appendix~\ref{app:2ign}, each $d_{t}$ can be interpreted as the dimension of the hidden state at layer $t$ attached to each pair of nodes. For simplicity of notations,  we assume in the following proof that $d_{t} = 1, \forall t = 0, 1, ..., L$ (in which case each $A^{(t)}$ is essentially a matrix), but note that
the proof can be extended to the cases where $d_{t} > 1$ by adding more subscripts in the argument.

Let $G$ and $G'$ be two unweighted regular graphs with the same degree $d$, and let $A$ and $A' \in \mathbb{R}^{n \times n}$ denote their adjacency matrices (which coincide with the matrix representation of $G$ and $G'$, respectively).
To prove Theorem \ref{prop.Ginvariant}, we show that a $2$-IGN model is bound to return the same output when applied to $A$ and $A'$. For the graph $G$, we let $E \subseteq [n]^2$ denote its edge set, and further define $S \coloneqq \{(i, i): i \in [n] \} \subseteq [n]^2$ and $N \coloneqq [n]^2 \setminus (E \cup S)$. Similarly, we define $E', S'$ and $N'$ for the graph $G'$.

\begin{lemma}
\label{lem.A_ind}
$\forall t \leq T$, $\exists \xi_1^{(t)}, \xi_2^{(t)}, \xi_3^{(t)} \in \mathbb{R}$ such that $\forall i, j \in [n]$,
\begin{equation}
\label{eq:At_ind}
    A^{(t)}_{ij} = \begin{cases}
    \xi_1^{(t)}~,&~ \text{ if } (i, j) \in E~, \\
    \xi_2^{(t)}~,&~ \text{ if } (i, j) \in N~, \\
    \xi_3^{(t)}~,&~ \text{ if } (i, j) \in S~;
    \end{cases}
    \qquad 
    A'^{(t)}_{ij} = \begin{cases}
    \xi_1^{(t)}~,&~ \text{ if } (i, j) \in E'~, \\
    \xi_2^{(t)}~,&~ \text{ if } (i, j) \in N'~, \\
    \xi_3^{(t)}~,&~ \text{ if } (i, j) \in S'~.
    \end{cases}
\end{equation}
\end{lemma}
This Lemma is proved in Appendix~\ref{lem.A_ind}.

Since $h$ is an linear permutation-invariant function, we can write $h(A) = w_1 \sum_{i=1}^n A_{i,i} + w_2 \sum_{i \neq j} A_{i,j}$ for some $w_1$ and $w_2$. Thus, 
\begin{align*}
    h(A^{(T)}) =&~ w_1 \sum_{(i, j) \in E \cup N} A^{(T)}_{i, j} + w_2 \sum_{(i, j) \in S} A^{(T)}_{i, j} \\
    =&~ w_1 \big ( |E| \xi_1^{(T)} + |N| \xi_2^{(T)} \big ) + w_2 |S| \xi_3^{(T)} \\
    =&~ w_1 \big ( |E'| \xi_1^{(T)} + |N'| \xi_2^{(T)} \big ) + w_2 |S'| \xi_3^{(T)} \\
    =&~ h(A'^{(T)})~,
\end{align*}
where we use the observation that $|E| = |E'|, |S| = |S'|$ and $|N| = |N'|$. As a consequence, $F(A) = m(h(A^{(L)})) = m(h(A'^{(L)})) = F(A')$.  \qed

\subsection{Proof of Lemma~\ref{lem.A_ind}}
\label{app:pf_lem_A_ind}
If $i, i', j, j' \in [n]$, we say $(i, j)$ and $(i', j')$ are equivalent as $2$-tuples of nodes (or node pairs) if $\exists$ a permutation $\pi$ (that is, $\pi$ is a bijective map from $[n]$ to itself) such that $\pi(i) = i'$ and $\pi(j) = j'$. For any $i, j \in [n]$, we let $\psi(i, j) \subseteq [n]^2$ denote the equivalence class of $2$-tuples containing $(i, j)$. Similarly, we can define a notion of equivalence between $4$-tuples of nodes and the corresponding equivalence classes denoted by $\phi(\cdot, \cdot, \cdot, \cdot)$.

We prove this lemma by induction. For $t=0$, $A^{(0)} = A$ and $A'^{(0)} = A'$. By the definition of the adjacency matrix, $A_{ij} = 1$ if $i \neq j$ and $(i, j) \in E$, and $0$ otherwise. Similar is true for $A'$. Therefore, setting $\xi_1^{(0)} = 1$ and $\xi_2^{(0)} = \xi_3^{(0)} = 0$, it is straightforward to verify that \eqref{eq:At_ind} holds.

Next, we consider the inductive steps. Assume that \eqref{eq:hatf} is satisfied at layer $t-1$. To simplify the notation, we will let $A, A', \xi_1, \xi_2$ and $\xi_3$ stand for $A^{(t-1)}, A'^{(t-1)}, \xi_1^{(t-1)}, \xi_2^{(t-1)}, \xi_3^{(t-1)}$ below. Thus, we want to show that if $L$ is any linear permutation-equivariant layer, then $\sigma(L(A)), \sigma(L(A'))$ also satisfies the inductive hypothesis. 
Using equation 9(b) in \cite{maron2018invariant} and adopting the notations of $T, B, C, w, \beta$ defined therein, we know that $L(A)$ can be written as follows: for any $i, j \in [n]$,
\begin{equation*}
    \begin{split}
        L(A)_{i, j} &= Y_{i, j} + \sum_{i', j' = 1}^{n} T_{i', j', i, j} A_{i', j'} \\
        &= \sum_{\Psi} \beta_{\Psi} C_{i, j}^{\Psi} + \sum_{i', j', \Phi} w_{\Phi} B_{i', j', i, j}^{\Phi} A_{i', j'} \\
        &= \beta_{\psi(i, j)} + \sum_\Phi S^{\Phi}_{i, j} w_\Phi~,
    \end{split}
\end{equation*}
where $\Psi$ is summed over all equivalence classes of $2$-tuples of nodes, $\Phi$ is summed over all equivalence classes of $4$-tuples of nodes, and we define
\begin{equation*}
    S^{\Phi}_{i, j} = \mathop{\sum_{i', j' \in [n]}}_{(i', j', i, j) \in \Phi} A_{i', j'}~.
\end{equation*}
Thus, by the inductive hypothesis and the fact that $[n]^2$ equals the disjoint of $E$, $N$ and $S$, there is 
\begin{equation}
\label{eq:S}
\begin{split}
    S^{\Phi}_{i, j} &= \sum_{\substack{i', j' \in [n] \\ (i', j', i, j) \in \Phi \\ (i', j') \in E}} A_{i', j'} + \sum_{\substack{i', j' \in [n] \\ (i', j', i, j) \in \Phi \\ (i', j') \in N}} A_{i', j'} + \sum_{\substack{i', j' \in [n] \\ (i', j', i, j) \in \Phi \\ (i', j') \in S}} A_{i', j'} \\
    &= \sum_{\substack{i', j' \in [n] \\ (i', j', i, j) \in \Phi \\ (i', j') \in E}} \xi_1 + \sum_{\substack{i', j' \in [n] \\ (i', j', i, j) \in \Phi \\ (i', j') \in N}} \xi_2 + \sum_{\substack{i', j' \in [n] \\ (i', j', i, j) \in \Phi \\ (i', j') \in S}} \xi_3 \\
    &= m_E(i, j; \Phi) \xi_1 + m_N(i, j; \Phi) \xi_2 + m_S(i, j; \Phi) \xi_3
\end{split}
\end{equation}
where $m_E(i, j; \Phi)$ is defined as the total number of distinct $(i', j') \in E$ that satisfies $(i', j', i, j) \in \Phi$, and similarly for $m_N(i, j; \Phi)$ and $m_S(i, j; \Phi)$. 

Note that $m_E(i, j; \Phi)$, $m_N(i, j; \Phi)$ and $m_S(i, j; \Phi)$ depend \emph{not} on the graph structure but only on $(i, j)$ and $\Phi$. In particular, for $\alpha, \beta \in \{E, N, S\}$, there exist $m_\alpha(\beta; \cdot)$ such that we may write
\begin{equation}
\label{eq:m}
    m_\alpha(i, j; \Phi) = \begin{cases}
        m_\alpha(E; \Phi)~,& \text{ if } (i, j) \in E~, \\
        m_\alpha(N; \Phi)~,& \text{ if } (i, j) \in N~, \\
        m_\alpha(S; \Phi)~,& \text{ if } (i, j) \in S~,
    \end{cases}
\end{equation}
for each $\alpha \in \{E, N, S\}$. The functions $m_\alpha(\beta; \cdot)$ depend on $n$ and $d$ and can be computed in a combinatorial fashion, whose values are completely enumerated in 
Tables~\ref{table:mE}, \ref{table:mN} and \ref{table:mS}. An illustration of their meanings is given in Figure \ref{coloredreg}.
\begin{figure}
\label{coloredreg}
    \centering
    \includegraphics[width=0.25\textwidth,trim={6cm 4cm 20cm 7cm},clip]{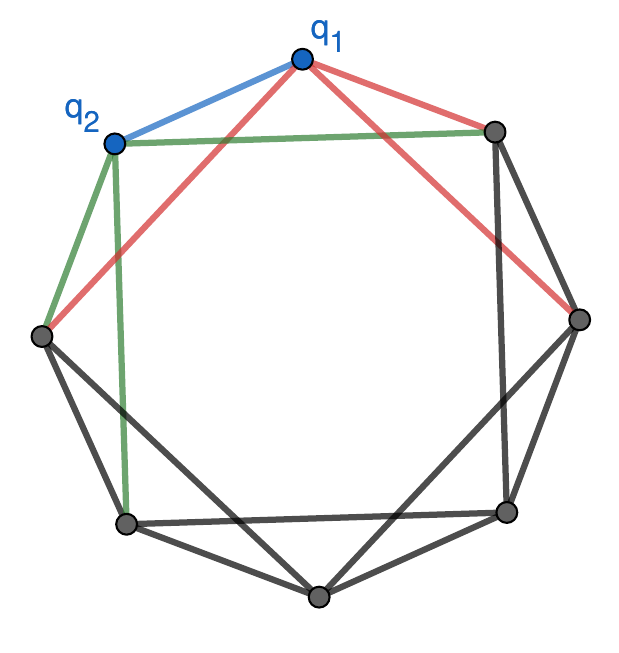}
    \includegraphics[width=0.25\textwidth,trim={6cm 4cm 20cm 7cm},clip]{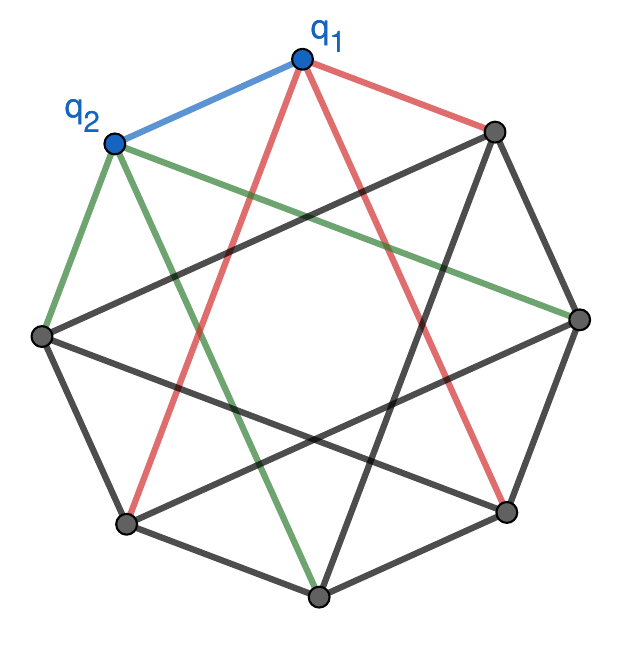}
    \caption{
    Illustrations of 
    $m_E(E, \cdot)$
    on graphs $G_{8, 2} $ and $G_{8, 3}$. In either graph, $m_E(E, \Phi(1, 2, 3, 4)) = 18$ represents the total number of node pairs $(i, j)$ such that $(i, j) \in E$ and $(i, j, q_1, q_2)$ is equivalent to $(1, 2, 3, 4)$ as $4$-tuples, and equals twice the total number of black edges (twice because if $(i, j)$ satisfies the condition, then so does $(j, i)$, even though they represent the same undirected edge); similarly, the total number of of red edges, $3$, equals both $m_E(E, \Phi(1, 2, 2, 3))$ and $m_E(E, \Phi(1, 2, 1, 3))$; the total number of green edges, also $3$, equals both $m_E(E, \Phi(1, 2, 3, 2))$, $m_E(E, \Phi(1, 2, 3, 1))$.}
    \label{fig:my_label}
\end{figure}

Combining \eqref{eq:S} and \eqref{eq:m}, we derive that
\begin{equation*}
    S^{\Phi}_{i, j} = \begin{cases}
        m_E(E; \Phi) \xi_1 + m_N(E; \Phi) \xi_2 + m_S(E; \Phi) \xi_3~,& \text{ if } (i, j) \in E~, \\
        m_E(N; \Phi) \xi_1 + m_N(N; \Phi) \xi_2 + m_S(N; \Phi) \xi_3~,& \text{ if } (i, j) \in N~, \\
        m_E(S; \Phi) \xi_1 + m_N(S; \Phi) \xi_2 + m_S(S; \Phi) \xi_3~,& \text{ if } (i, j) \in S~.
    \end{cases}
\end{equation*}
Moreover, note that $\beta_{\psi(i, j)}$ depends only on the equivalence class to which $(i, j)$ belongs, and hence we may write
\begin{equation*}
    \beta_{\psi(i, j)} = \begin{cases}
        \beta_1~,& \text{ if } (i, j) \in E \text{ or } N~, \\
        \beta_2~,& \text{ if } (i, j) \in S~,
    \end{cases}
\end{equation*}
for some $\beta_1$ and $\beta_2$. Hence, we can now write
\begin{align*}
    L(A)_{i, j} = \begin{cases}
        \bar{\xi}_1 \coloneqq \beta_1 + \sum_\Phi \big ( m_E(E; \Phi) \xi_1 + m_N(E; \Phi) \xi_2 + m_S(E; \Phi) \xi_3 \big ) w_\Phi~,& \text{ if } (i, j) \in E~, \\
        \bar{\xi}_2 \coloneqq \beta_1 + \sum_\Phi \big ( m_E(N; \Phi) \xi_1 + m_N(N; \Phi) \xi_2 + m_S(N; \Phi) \xi_3 \big ) w_\Phi~,& \text{ if } (i, j) \in N~, \\
        \bar{\xi}_3 \coloneqq \beta_2 + \sum_\Phi \big ( m_E(S; \Phi) \xi_1 + m_N(S; \Phi) \xi_2 + m_S(S; \Phi) \xi_3 \big ) w_\Phi~,& \text{ if } (i, j) \in S~.
    \end{cases}
\end{align*}
With a similar argument, we can derive that 
\begin{align*}
    L(A')_{i, j} = \begin{cases}
        \bar{\xi}_1~,& \text{ if } (i, j) \in E'~, \\
        \bar{\xi}_2~,& \text{ if } (i, j) \in N'~, \\
        \bar{\xi}_3~,& \text{ if } (i, j) \in S'~.
    \end{cases}
\end{align*}
Hence, we have shown that \eqref{eq:At_ind} holds at layer $t$ with $\xi_1^{(t)} = \sigma(\bar{\xi}_1)$, $\xi_2^{(t)} = \sigma(\bar{\xi}_2)$ and $\xi_3^{(t)} = \sigma(\bar{\xi}_3)$. \qed

\begin{table}[h]
\centering
\begin{tabular}{llll}
\multicolumn{1}{l|}{$\mu$}        & $m_E(E, \mu)$ & $m_E(N, \mu)$ & $m_E(S, \mu)$ \\ \hline
\multicolumn{1}{l|}{(1, 2, 3, 4)} & $(n-4)d+2$      & $(n-4)d$        & 0             \\
\multicolumn{1}{l|}{(1, 1, 2, 3)} & 0             & 0             & 0             \\
\multicolumn{1}{l|}{(1, 2, 2, 3)} & $d-1$           & $d$             & 0             \\
\multicolumn{1}{l|}{(1, 2, 1, 3)} & $d-1$           & $d$             & 0             \\
\multicolumn{1}{l|}{(1, 2, 3, 2)} & $d-1$           & $d$             & 0             \\
\multicolumn{1}{l|}{(1, 2, 3, 1)} & $d-1$           & $d$             & 0             \\
\multicolumn{1}{l|}{(1, 1, 1, 2)} & 0             & 0             & 0             \\
\multicolumn{1}{l|}{(1, 1, 2, 1)} & 0             & 0             & 0             \\
\multicolumn{1}{l|}{(1, 2, 1, 2)} & 1             & 0             & 0             \\
\multicolumn{1}{l|}{(1, 2, 2, 1)} & 1             & 0             & 0             \\
\multicolumn{1}{l|}{(1, 2, 3, 3)} & 0             & 0             & $(n-2)d$        \\
\multicolumn{1}{l|}{(1, 1, 2, 2)} & 0             & 0             & 0             \\
\multicolumn{1}{l|}{(1, 2, 2, 2)} & 0             & 0             & $d$             \\
\multicolumn{1}{l|}{(1, 2, 1, 1)} & 0             & 0             & $d$             \\
\multicolumn{1}{l|}{(1, 1, 1, 1)} & 0             & 0             & 0             \\ \hline
\multicolumn{1}{l|}{Total}                             & $nd$            & $nd$            & $nd$           
\end{tabular}
\caption{$m_E$}
\label{table:mE}
\end{table}

\begin{table}[h]
\centering
\begin{tabular}{llll}
\multicolumn{1}{l|}{$\mu$}        & $m_N(E, \mu)$ & $m_N(N, \mu)$ & $m_N(S, \mu)$ \\ \hline
\multicolumn{1}{l|}{(1, 2, 3, 4)} & $(n-4)(n-d-1)$      & $(n-4)(n-d-1) + 2$        & 0             \\
\multicolumn{1}{l|}{(1, 1, 2, 3)} & 0             & 0             & 0             \\
\multicolumn{1}{l|}{(1, 2, 2, 3)} & $n-d-1$           &$ n-d-2$             & 0             \\
\multicolumn{1}{l|}{(1, 2, 1, 3)} & $n-d-1$           & $n-d-2$             & 0             \\
\multicolumn{1}{l|}{(1, 2, 3, 2)} & $n-d-1$           & $n-d-2$             & 0             \\
\multicolumn{1}{l|}{(1, 2, 3, 1)} & $n-d-1$           & $n-d-2$             & 0             \\
\multicolumn{1}{l|}{(1, 1, 1, 2)} & 0             & 0             & 0             \\
\multicolumn{1}{l|}{(1, 1, 2, 1)} & 0             & 0             & 0             \\
\multicolumn{1}{l|}{(1, 2, 1, 2)} & 0             & 1             & 0             \\
\multicolumn{1}{l|}{(1, 2, 2, 1)} & 0             & 1             & 0             \\
\multicolumn{1}{l|}{(1, 2, 3, 3)} & 0             & 0             & $(n-2)(n-d-1)$        \\
\multicolumn{1}{l|}{(1, 1, 2, 2)} & 0             & 0             & 0             \\
\multicolumn{1}{l|}{(1, 2, 2, 2)} & 0             & 0             & $n-d-1$             \\
\multicolumn{1}{l|}{(1, 2, 1, 1)} & 0             & 0             & $n-d-1$             \\
\multicolumn{1}{l|}{(1, 1, 1, 1)} & 0             & 0             & 0             \\ \hline
\multicolumn{1}{l|}{Total}                             & $n(n-d-1)$            & $n(n-d-1)$            & $n(n-d-1)$           
\end{tabular}
\caption{$m_N$}
\label{table:mN}
\end{table}

\begin{table}[h]
\centering
\begin{tabular}{llll}
\multicolumn{1}{l|}{$\mu$}        & $m_S(E, \mu)$ & $m_S(N, \mu)$ & $m_S(S, \mu)$ \\ \hline
\multicolumn{1}{l|}{(1, 2, 3, 4)} & 0           & 0        & 0             \\
\multicolumn{1}{l|}{(1, 1, 2, 3)} & $n-2$             & $n-2$             & 0             \\
\multicolumn{1}{l|}{(1, 2, 2, 3)} & 0           & 0             & 0             \\
\multicolumn{1}{l|}{(1, 2, 1, 3)} & 0           & 0             & 0             \\
\multicolumn{1}{l|}{(1, 2, 3, 2)} & 0           & 0             & 0             \\
\multicolumn{1}{l|}{(1, 2, 3, 1)} & 0           & 0             & 0             \\
\multicolumn{1}{l|}{(1, 1, 1, 2)} & 1             & 1             & 0             \\
\multicolumn{1}{l|}{(1, 1, 2, 1)} & 1             & 1             & 0             \\
\multicolumn{1}{l|}{(1, 2, 1, 2)} & 0             & 0             & 0             \\
\multicolumn{1}{l|}{(1, 2, 2, 1)} & 0             & 0             & 0             \\
\multicolumn{1}{l|}{(1, 2, 3, 3)} & 0             & 0             & 0        \\
\multicolumn{1}{l|}{(1, 1, 2, 2)} & 0             & 0             & $n-1$             \\
\multicolumn{1}{l|}{(1, 2, 2, 2)} & 0             & 0             & 0             \\
\multicolumn{1}{l|}{(1, 2, 1, 1)} & 0             & 0             & 0             \\
\multicolumn{1}{l|}{(1, 1, 1, 1)} & 0             & 0             & 1             \\ \hline
\multicolumn{1}{l|}{Total}                             & $n$            & $n$            & $n $          
\end{tabular}
\caption{$m_S$}
\label{table:mS}
\end{table}

\newpage
\section{GNN Models in the Experiments}
\label{archi}
In section \ref{experiments}, we show experiments on synthetic and real datasets with several different GNN architectures. Here are some additional details of these models.
\begin{itemize}
\item $\textbf{sGNN}_J$: The spectral GNN models defined in Appendix~\ref{app:diagram}, where we choose $J = 1, 2$ and $5$.
The models have 5 layers and hidden layer dimension (i.e. $d_t$) 64. They are trained using the Adam~\cite{kingma2014adam} optimizer with learning rate 0.01.
\item \textbf{LGNN}: Line Graph Neural Networks proposed by \cite{chen2019cdsbm}. The model has 5 layers and hidden layer dimension 64. They are trained using the Adam optimizer with learning rate 0.01.
\item \textbf{GIN}: Graph Isomorphism Network by \cite{xu2018powerful}. We take their performance on the IMDB datasets reported in \cite{xu2018powerful}, and their performance on classifying the CSL graphs reported in \cite{murphy2019relational} .
\item \textbf{RP-GIN}: Graph Isomorphism Network combined with Relational Pooling by \cite{murphy2019relational}. We took the results reported in \cite{murphy2019relational} for the CSL graphs experiment.
\item \textbf{$2$-IGN}: The model defined in Appendix based on \cite{maron2018invariant} and \cite{maron2019universality} and implemented in \url{https://github.com/Haggaim/InvariantGraphNetworks}.
\item \textbf{Ring-GNN}: The definition is given in the main text. For the experiments on the IMDB datasets, the \emph{Ring-GNN} model has the same depth and widths of hidden layers as the $2$-IGN model adopted in \cite{maron2018invariant}. The \emph{Ring-GNN w/ degree} model has 2 layers with 64 hidden units in each, followed by a jump knowledge network~\cite{xu2018representation}, which is then followed by a fully-connected layer with 32 hidden units. Each $k_1^{(t)}$ is initialized independently under $\mathcal{N}(0,1)$, and each $k_2^{(t)}$ is initialized independently under $\mathcal{N}(0,0.01)$. They are trained using the Adam~\cite{kingma2014adam} optimizer with learning rate 0.00001 for 350 epochs. The initialization of $k_2^{(t)}$ and the learning rate were manually tuned, following the observation that Ring-GNN reduces to $2$-IGN when $k_2^{(t)}=0$.
For the other real-world datasets, models are trained via Adam with learning rate of 0.001 for 350 epochs. The model has 1 layer for MUTAG, 2 layers for PROTEINS and PTC, and 3 layers for COLLAB. Each of these layers has 64 hidden units and is followed by a jump knowledge network~\cite{xu2018representation}, which is then followed by a fully-connected layer with 32 hidden units. $k_1^{(t)}$ is initialized to be 1, and $k_2^{(t)}$ is initialized with $\{0.5/n, 1.0/n\}$, where $n$ is the average number of nodes per graph in each dataset.
\end{itemize}

For the experiments with CSL graphs, each model is trained and evaluated using 5-fold cross-validation. For Ring-GNN, we perform training plus cross-validation 20 times with different random seeds.

\end{document}